\newcommand\Tstrut{\rule{0pt}{1.5ex}}         
\theoremstyle{plain}
\newtheorem{theorem}{Theorem}
\theoremstyle{definition}
\theoremstyle{remark}
\title{A Stochastic Approach to Bi-Level Optimization for\\Hyperparameter Optimization and Meta Learning}
\author{
    Written by AAAI Press Staff\textsuperscript{\rm 1}\thanks{With help from the AAAI Publications Committee.}\\
    AAAI Style Contributions by Pater Patel Schneider,
    Sunil Issar,\\
    J. Scott Penberthy,
    George Ferguson,
    Hans Guesgen,
    Francisco Cruz\equalcontrib,
    Marc Pujol-Gonzalez\equalcontrib
}
\title{A Stochastic Approach to Bi-Level Optimization for\\Hyperparameter Optimization and Meta Learning}
\author{Minyoung Kim$^1$ \ \& \ Timothy M.~Hospedales$^{1,2}$ \\
$^1$Samsung AI Center Cambridge, UK \ \ \ \ \ \ \ \ \ \ \ \ $^2$University of Edinburgh, UK \\
\ \texttt{mikim21@gmail.com} \ \ \ \ \ \ \ \ \ \ \ \ \ \ \ \ \ \ \ \ \ \ \ \ \ \ \ \ \ \ \  \texttt{t.hospedales@ed.ac.uk}
}
\title{A Stochastic Approach to Bi-Level Optimization for\\Hyperparameter Optimization and Meta Learning}
\author{Minyoung Kim$^1$ \ \ \ \ \ \ \ \ \ \ \ \ \ \ Timothy M.~Hospedales$^{1,2}$ \vspace{+0.5em}\\ 
{\small 
\texttt{mikim21@gmail.com} \ \ \ \ \ \ \ \ \ \ \ \ \ \ \ \ \ \ \ \ \texttt{t.hospedales@ed.ac.uk} \ \ \ \ \ \ \ \ \ \ \\
$^1$Samsung AI Center Cambridge, UK \ \ \ \ \ \ \ \ \ $^2$University of Edinburgh, UK \ \ \ \ \ \ \ }
}
\begin{document}

\maketitle

\begin{abstract}
We tackle the general differentiable meta learning problem that is ubiquitous in modern deep learning, including hyperparameter optimization, loss function learning, few-shot learning, invariance learning and more. These problems are often formalized as Bi-Level optimizations (BLO). We introduce a novel perspective by turning a given BLO problem into a stochastic optimization, where the inner loss function becomes a smooth probability distribution, and the outer loss becomes an expected loss over the inner distribution. To solve this stochastic optimization, we adopt Stochastic Gradient Langevin Dynamics (SGLD) MCMC to sample inner distribution, and propose a recurrent algorithm to compute the MC-estimated hypergradient. Our derivation is similar to forward-mode differentiation, but we introduce a new first-order approximation that makes it feasible for large models without needing to store huge Jacobian matrices. The main benefits are two-fold: i) Our stochastic formulation takes into account uncertainty, which makes the method robust to suboptimal inner optimization or non-unique multiple inner minima due to overparametrization; ii) Compared to existing methods that often exhibit unstable behavior and hyperparameter sensitivity in practice, our method leads to considerably more reliable solutions. We demonstrate that the new approach achieves promising results on diverse meta learning problems and easily scales to learning 87M hyperparameters in the case of Vision Transformers.
\end{abstract}

%

\section{Introduction}\label{sec:intro}

We consider general differentiable meta learning, which includes bi-level optimization (BLO) problems such as hyperparameter optimization (HPO) and few-shot meta-learning.
We introduce our problem setting, terminology and notation through a representative BLO problem of learning neural network regularizer hyperparameters (aka weight decay, or {\em L1/2}-regularization). Expressed as a BLO, this is
\vspace{+0.3em}
\begin{align}
\min_{\lambda} & \ \mathbb{E}_{(x,y)\sim V}[ l_V(x,y;\theta^*(\lambda)) ] \label{eq:blo_reg} \\
& \textrm{s.t.} \ \theta^*(\lambda) = \arg\min_\theta \mathbb{E}_{(x,y)\sim T}[ l_T(x,y;\theta) ] + R(\lambda,\theta), \nonumber
\end{align}
where $\theta$ are neural network parameters to learn, $\lambda$ are all hyperparameters to learn (often denoted as outer variables or hyperparameters, depending on the context), $T$ and $V$ indicate training and validation data sets, respectively, $l_T$ and $l_V$ indicate training and validation losses, and $R$ is the regularization or weight decay term. In the parameter-wise regularization case, $R(\lambda,\theta) = \sum_j \lambda_j \theta_j^2$ or $\sum_j \lambda_j |\theta_j|$ in which $\lambda$ has the same structure as $\theta$.

The main challenge in differentiable BLO is to efficiently and accurately compute the hypergradient $dl_V/d\lambda$ to update the hyperparameters. %
In the literature there exist some well-known approaches. Some aim to unroll the inner-loop optimization (i.e., forward- or reverse-mode differentiation \cite{fmd_rmd}, denoted by FMD and RMD, respectively), while others aim to compute it via the implicit function theorem (IFT) to circumvent the infeasible memory cost of saving intermediate Hessian matrices (in FMD) or computation graphs (in RMD) from unrolled inner optimization steps.  Despite the theoretical promise of IFT-based meta-gradient \cite{imaml,ift} for memory-efficiency, it still suffers from a difficult and potentially unstable Hessian inverse, and it is highly reliant on the \emph{gradient-equal-to-zero} (i.e., perfectly converged) condition for the inner optimization. In practice these drawbacks mean that it is unreliable and hard to tune. 


A fundamental issue for all these existing BLO solutions is that the inner optimization is deterministic, and they are unable to account for the uncertainty in the inner problem and/or its solution. When applied to contemporary deep learning there are at least two highly practical examples of inner optimization uncertainty: When implemented in practice  by minibatch SGD with a finite number of steps, the inner loop is not perfectly converged (violating IFT's assumption); and when applied to non-convex neural networks, the inner optimization has many local minima due to overparametrization~\cite{overparam_blo}, leading to uncertainty over which inner minima is returned. 

To address these issues, we introduce a new stochastic gradient approach to hyper-gradient calculation for BLO. We generalize the standard deterministic BLO problem to a stochastic optimization where the inner optimization produces a smooth probability distribution, and the outer optimization takes an expectation with respect to the inner posterior. 
This stochastic generalization allows us to deal with uncertainty in the inner optimization arising from noise, the use of minibatch SGD, or from multiple local minima. More specifically, we exploit SGLD to obtain posterior samples of the inner optimization posterior, and derive a new efficient hypergradient estimation algorithm by taking derivatives of the SGLD step equations.

\section{Proposed Algorithm}\label{sec:main}

We first introduce {\em stochastic optimization}, and show that it generalizes 
the BLO problem. We argue that solving the stochastic optimization is preferable to solving the deterministic BLO due to better handling of inner optimization uncertainty, which we illustrate via a failure case of BLO in Sec.~\ref{sec:toy_poly_regr}. 
Our proposed algorithm for solving the stochastic optimization is described in Sec.~\ref{sec:sgld_recursion}, in which new recursions for meta gradients are derived from stochastic gradient MCMC (Langevin dynamics).

\subsection{Stochastic optimization (SO) 
}\label{sec:stoch_optim}

We aim to solve the {\em stochastic optimization} problem:
\begin{align}
\min_\lambda \ \mathbb{E}_{p(\theta|\lambda)}[f(\lambda,\theta)] \ \ \textrm{s.t.} \ \ 
p(\theta|\lambda) = \frac{\exp(-E(\lambda,\theta))}{Z(\lambda)},
\label{eq:stoch_optim}
\end{align}
where $f(\lambda,\theta)$ and $E(\lambda,\theta)$ are the given scalar functions for the problem, and $Z(\lambda)$ is the normalizer that ensures $p(\theta|\lambda)$ to be a proper distribution. 
We have no restriction or constraints on the problem instance functions $f()$ and $E()$ except for the minimal assumption that evaluating these functions and taking their derivatives are straightforward and easy. We will see that this stochastic optimization formulation is quite flexible and general enough to encompass the general Bi-Level optimization (BLO) problem, that often arises in deep learning, as a special case. 
We will see this connection in detail in what follows.

\noindent$\clubsuit$~\textbf{BLO as a special case of Stochastic optimization.}
The Bi-Level optimization (BLO) problem, despite its several different forms, can be formally and concisely expressed as:
\vspace{+0.3em}
\begin{align}
\min_\lambda \mathcal{L}_V(\lambda,\theta^*(\lambda)) \ \ \textrm{s.t.} \ \ \theta^*(\lambda) \underbrace{=}_{\textrm{or}\ \in} \arg\min_\theta \mathcal{L}_T(\lambda,\theta).
\label{eq:blo}
\end{align}
We often call $\mathcal{L}_V()$ the {\em outer objective} and $\mathcal{L}_T()$ the {\em inner objective}. In typical {\em hyperparameter optimization} (HPO) problems, $\mathcal{L}_V()$ and $\mathcal{L}_T()$ correspond to validation and training losses respectively. Meanwhile $\lambda$ is the set of hyperparameters to learn and $\theta$ are the main parameters (weights of the neural network).
We will show that this (deterministic) BLO problem in (\ref{eq:blo}) is a special case of the stochastic optimization in (\ref{eq:stoch_optim}). First we let the objective function $f(\lambda,\theta)$ in (\ref{eq:stoch_optim}) be equal to $\mathcal{L}_V(\lambda,\theta)$, that is,
\begin{align}
f(\lambda,\theta) := \mathcal{L}_V(\lambda,\theta),
\end{align}
and define the distribution $p(\theta|\lambda)$ as:
\begin{align}
p(\theta|\lambda)\!:=\!\frac{e^{-\mathcal{L}_T(\lambda,\theta)/\tau}}{Z_\tau(\lambda)}, \ \ 
Z_\tau(\lambda)\!=\!\int\!e^{-\mathcal{L}_T(\lambda,\theta)/\tau} \ d\theta,
\end{align}
and $\tau\!>\!0$ is a hyperparameter known as the {\em temperature}. In other words, we set the energy function of $p(\theta|\lambda)$ to be:
\begin{align}
E(\lambda,\theta) := \mathcal{L}_T(\lambda,\theta)/\tau
\label{eq:energy_train_loss}
\end{align}
Through the hyperparameter $\tau$ we can control {\em how certain (or uncertain)} the impact of the inner optimization is on the outer optimization. As an extreme case, it is obvious that as we tend $\tau\!\to\!0$, we make $p(\theta|\lambda)$ converge to the delta function $\delta(\theta\!-\!\theta^*(\lambda))$, and thus the stochastic problem (\ref{eq:stoch_optim}) coincides with (\ref{eq:blo}).



\subsection{Why is SO better than deterministic BLO?}\label{sec:toy_poly_regr}

BLO's deterministic nature inner optimization might be problematic in certain situations. In this section we aim to highlight this issue via an illustrative example where SO succeeds and BLO fails.

Mainstream BLO algorithms used in deep learning rely on {\em one single} optimal solution $\theta^*(\lambda)$ for the inner optimization given $\lambda$, even though the inner optimization problem can be noisy (e.g., noise in training data) and/or can have multiple different local/global optima. Note that the latter situation almost always arises for models with a high degree of redundancy (i.e., overparametrization) such as deep neural networks. The worst scenario is that a (deterministic) BLO algorithm unfortunately selectings an optimum $\theta^*(\lambda)$ that leads to a poor (high) outer loss at each outer iteration, which can eventually lead us to select a poor outer variable $\lambda$ as the final solution of the BLO. 

On the other hand, our stochastic optimization formulation in (\ref{eq:stoch_optim}) essentially takes into account {\em all} $\theta$ as possible solutions to the inner optimization problem through a probability distribution $p(\theta|\lambda)$. It is also beneficial in that the solutions become more robust to potential noise that may reside in the inner problem such as noisy training data, or solution by minibatch-SGD, since we deal with the {\em average} outer loss over all possible $\theta$'s instead of a single-outcome loss.

\begin{figure*}[t!]
\vspace{-1.5em}
\begin{center}
%
\centering
\includegraphics[trim = 3mm 0mm 15mm 3mm, clip, scale=0.22]{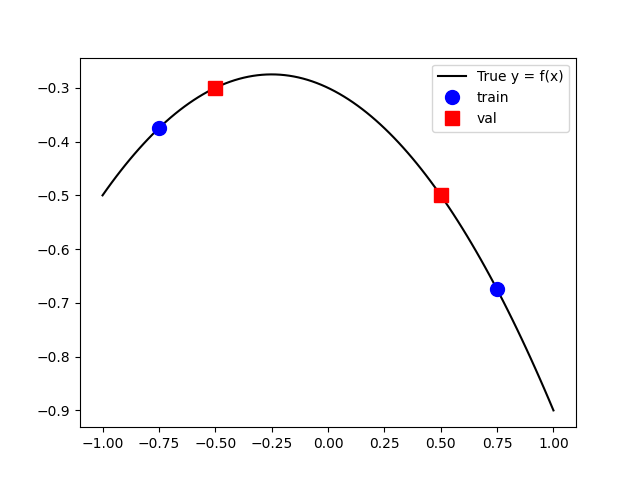} 
\ \   
\includegraphics[trim = 3mm 0mm 10mm 14mm, clip, scale=0.22]{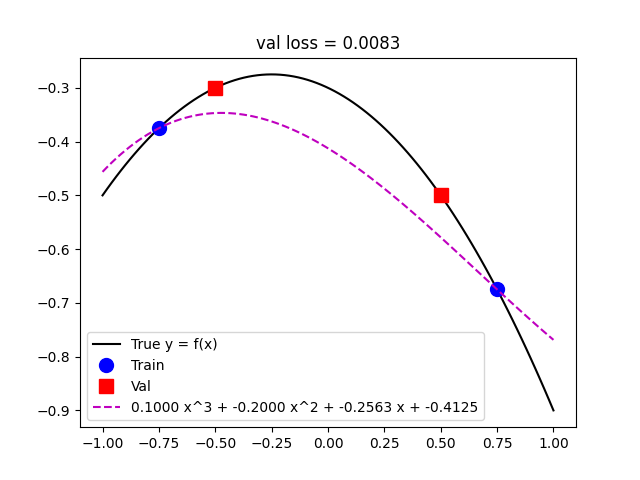} 
\includegraphics[trim = 18mm 0mm 10mm 14mm, clip, scale=0.22]{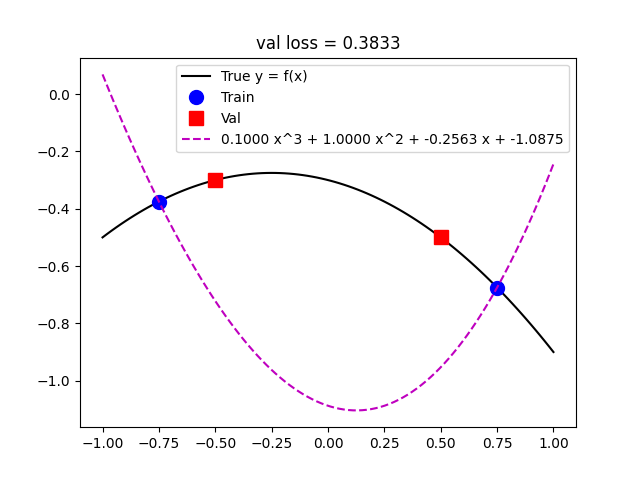} 
\ \ 
\includegraphics[trim = 8mm 3mm 65mm 18mm, clip, scale=0.182]{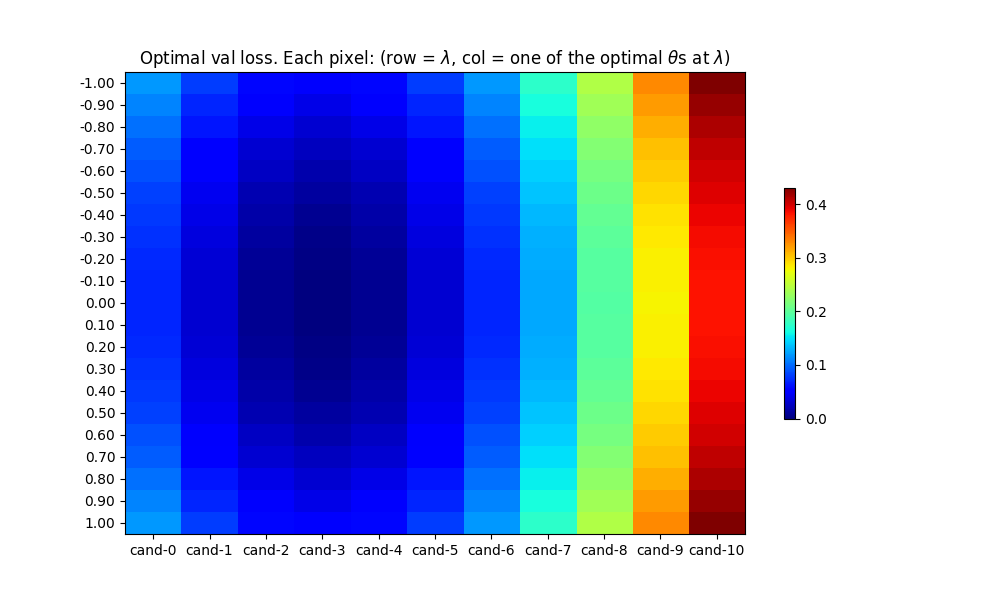} 
\ \  
\includegraphics[trim = 8mm 3mm 65mm 18mm, clip, scale=0.182]{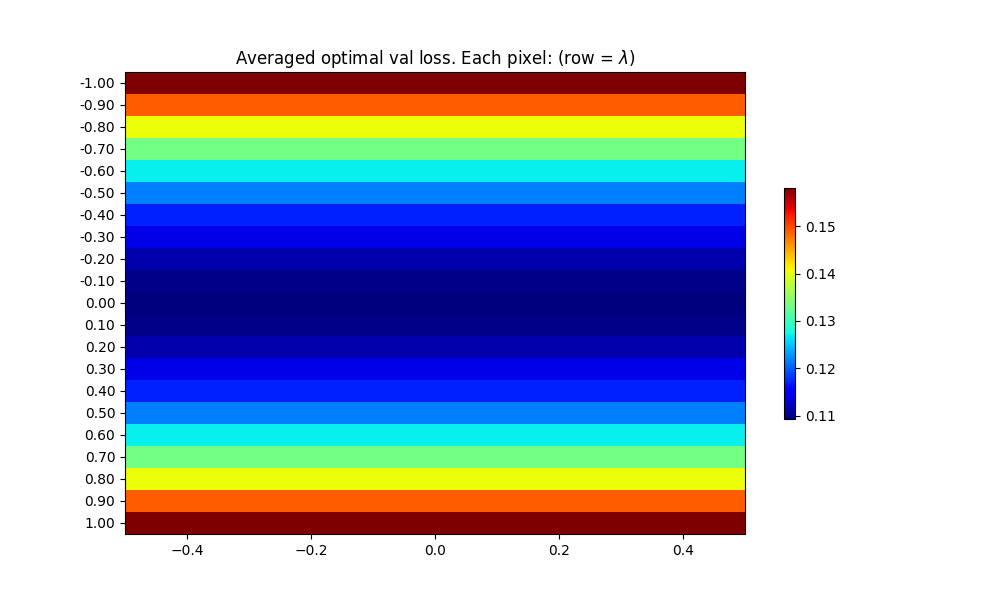} 
\\ \vspace{-0.5em}
\ \ \ \ \ (a) \ \ \ \ \ \ \ \ \ \ \ \ \ \ \ \ \ \ \ \ \ \ \ \ \ \ \ \ \ \ \ \ \ \ \ \ \ \ \ \ \ \ \ \ \ \ \ \ \ \ \ (b) \ \ \ \ \ \ \ \ \ \ \ \ \ \ \ \ \ \ \ \ \ \ \ \ \ \ \ \ \ \ \ \ \ \ \ \ \ \ \ \ \ \ \ \ \ \ \ \ \ \ \ \ \ (c) \ \ \ \ \ \ \ \ \ \ \ \ \ \ \ \ \ \ \ \ \ \ \ \ \ \ \ \ \ \ \ \ \ \ \ \ (d)
\end{center}
\vspace{-1.5em}
\caption{Illustrative toy problem. 
(a) Training and validation data. 
(b) Two $\theta^*(\lambda)$ solutions at $\lambda\!=\!0.1$; (Left) Good $\theta^*(\lambda)$ (val loss 0.0083), (Right) Poor $\theta^*(\lambda)$ (val loss 0.3833).
(c) $\mathcal{L}_V(\lambda,\theta^*(\lambda))$. Each row $\!=\!\lambda$, each column $\!=\!$ one of $\theta^*(\lambda)$. Blue (red) indicates low (high) loss value. 
(d) Row-wise average validation losses $\mathbb{E}_{p(\theta|\lambda)}[\mathcal{L}_V(\lambda,\theta)]$ employed in our proposed stochastic optimization (SO). 
}
\vspace{-1.5em}
\label{fig:toy_poly_regr}
\end{figure*}

To illustrate this benefit clearly and explicitly, we present a toy scenario of the polynomial function regression hyperparameter learning problem. 
First we consider the quadratic $f_{true}(x)\!=\!-0.4 x^2\!-\!0.2 x\!-\!0.3$ as the true data generating function on $x\in[-1,1]$, from which we sample four points: $T\!=\!\{(-0.75,-0.375), (0.75,-0.675)\}$ constituting the training data, and $V\!=\!\{(-0.5,-0.3), (0.5,-0.5)\}$ the validation data (See Fig.~\ref{fig:toy_poly_regr}(a)). 

Our model is assumed to have a cubic form, $f(x;\lambda,\theta)\!=\!\lambda x^3 + \theta_2 x^2 + \theta_1 x + \theta_0$ where $\lambda \!\in\![-1,1]$ is the hyperparameter that decides the degree/smoothness of the polynomial, and $\theta=[\theta_0,\theta_1,\theta_2]^\top\!\in\![-1,1]^3$ are the main parameters. We define the losses as: $\mathcal{L}_V(\lambda,\theta) = \sum_{(x,y)\in V} (f(x;\lambda,\theta)\!-\!y)^2$ and $\mathcal{L}_T(\lambda,\theta) = \sum_{(x,y)\in T} (f(x;\lambda,\theta) - y)^2$ for the validation and train dataset $V$ and $T$, respectively.

With the BLO formulation, we can solve it by an exhaustive {\em tabular} method. Namely, for each candidate $\lambda$ we find {\em all} possible inner optimal solutions $\theta^*(\lambda)$, which are infinitely many, but can be enumerated for some grid of values. Then for each candidate $(\lambda,\theta^*(\lambda))$ we evaluate the outer loss value $\mathcal{L}_V(\lambda,\theta^*(\lambda))$. This is depicted in Fig.~\ref{fig:toy_poly_regr}(c). 
We can see that for each row (given $\lambda$), there are many different $\theta^*(\lambda)$ that yield different outer losses $\mathcal{L}_V$ (cooler colors for lower $\mathcal{L}_V$ than warmer). Note however that they all attain perfect inner loss $\mathcal{L}_T\!=\!0$ (i.e., \emph{all are global inner optima}). Meanwhile, by inspection, can see that the optimal $\lambda$ has to be $0$ (thus quadratic), and 
$\theta^*(0)$ (one of the fourth column in Fig~\ref{fig:toy_poly_regr}(c)) identifies the true quadratic $f_{true}$. 
Examples of good and poor $\theta^*(\lambda)$ at $\lambda\!=\!0.1$ are shown in Fig.~\ref{fig:toy_poly_regr}(b). 

Now, consider typical deterministic BLO solutions, which select just {\em one of the inner optima} $\theta^*(\lambda)$ for each $\lambda$. In Fig.~\ref{fig:toy_poly_regr}(c), we can think of them as {\em randomly} selecting a column for each row $\lambda$. Then the final solution $\lambda$ is chosen as the one with the smallest validation loss among those randomly selected. So, there is a high chance that the final solution is not the optimal one $\lambda\!=\!0$ (e.g., when a good $\theta^*(\lambda)$ is chosen for some suboptimal $\lambda\!\neq\!0$, but a poor $\theta^*(\lambda)$ is chosen for $\lambda\!=\!0$). 
On the other hand, the stochastic optimization  considers the average $\mathbb{E}_{p(\theta|\lambda)}[\mathcal{L}_V(\lambda,\theta)]$ for each $\lambda$, and selects the $\lambda$ with the smallest average. This is visualized in Fig.~\ref{fig:toy_poly_regr}(d), where each row ($\lambda$) has an averaged validation loss. Clearly it is guaranteed to select the optimal $\lambda\!=\!0$ as the final solution. 

When we ran these two approaches on the tabularized BLO problem with 21 (11) equally spaced $\lambda$ ($\theta^*(\lambda)$) grid point values, the deterministic BLO solution via the random column selection was able to find the optimal $\lambda\!=\!0$ for only three out of 20 different runs\footnote{We emphasize that this result has nothing to do with the quality of hypergradients, because it is a grid search. So previous variance-reduction techniques~\cite{pmlr-v162-yang22g} for hypergradients would not address the issue.
}, whereas stochastic optimization always found the optimal solution.

\subsection{Proposed Solution: SGLD Gradient Recursion}\label{sec:sgld_recursion}

We propose a novel method to solve the stochastic optimization (\ref{eq:stoch_optim}). 
Using the Monte-Carlo approximation, 
\begin{align}
\mathbb{E}_{p(\theta|\lambda)}[f(\lambda,\theta)] \approx \frac{1}{M}\!\sum_{m=1}^M\!f(\lambda, \theta^{m}), \ \theta^{m}\!\sim\!p(\theta|\lambda),
\label{eq:mc_estimate}
\end{align}
where $M$ is the number of MC samples. 
The (approximate) hypergradient of the objective can be written as:
\begin{align}
\frac{d}{d\lambda} \mathbb{E}_{p(\theta|\lambda)}[f(\lambda,\theta)] \approx \frac{1}{M} \sum_{m=1}^M \frac{d f(\lambda, \theta^{m})}{d \lambda}, 
\label{eq:dEf_dlambda}
\end{align}
and using the chain rule ($d$ for total, $\partial$ for partial gradients),
\begin{align}
\frac{d f(\lambda, \theta^{m})}{d \lambda} = 
\frac{\partial f(\lambda, \theta^{m})}{\partial \lambda} + \frac{\partial f(\lambda, \theta^{m})}{\partial \theta} \cdot \frac{d \theta^{m}}{d \lambda}.
\label{eq:df_dlambda}
\end{align}
Note that each sample $\theta^{m}$ is a function of $\lambda$. However, in general it may be difficult to express $\theta^{m}$ explicitly in terms of $\lambda$. 

We instead adopt the stochastic-gradient Langevin dynamic  SGLD~\cite{sgld}, which repeats the recurrence:
\begin{align}
\theta \ \leftarrow \ \theta + \frac{\epsilon}{2} \frac{\partial \log p(\theta|\lambda)}{\partial \theta} + \sqrt{\epsilon} z,
\label{eq:sgld}
\end{align}
where $z\!\sim\!\mathcal{N}(0,\kappa^2 I)$ 
and $\epsilon$ is a small (ideally infinitesimal) step size. We assume that $\epsilon^{1+\alpha}$ is virtually $0$ for $\alpha\!\geq\!0.5$ (e.g., $\epsilon^2\!\simeq\!0$ and $\epsilon^{1.5}\!\simeq\!0$), which is reasonable as $\epsilon$ is considered to be very small. 
A nice property is that (\ref{eq:sgld}) only requires the {\em gradient} of $\log p(\theta|\lambda)$, not the log-density itself, so even though the normalizer $Z(\lambda)$ might be complicated, we can only deal with the energy function, that is, 
\begin{align}
\frac{\partial \log p(\theta|\lambda)}{\partial \theta} = - \frac{\partial E(\lambda,\theta)}{\partial \theta}. 
\end{align}

It is known that after some burn-in period, the iterates of (\ref{eq:sgld}) converge to the samples of the stationary distribution $p(\theta|\lambda)$. Also according to the Markov chain theorems~\cite{mcmc_book}, we can start from {\em any} initial iterate $\theta$ to make the chain converge to the target stationary distribution after a burn-in period.
We let $\theta^{(0)}$ be the first iterate in the SGLD recurrence: $\theta^{(0)}$ can be either independent of $\lambda$ (e.g., chosen randomly and independently of $\lambda$) or a function of $\lambda$ (e.g., $\lambda$ is the initial network parameters to be meta-learned in the MAML-type problems)\footnote{
Even though we can choose $\theta^{(0)}$ 
independently of $\lambda$, since we are only able to run the SGLD recurrences a {\em finite} number of times, it might be useful to have the first iterate $\theta^{(0)}$ dependent on $\lambda$. A good example is the MAML-type problems where we set $\theta^{(0)}\!=\!\lambda$. 
}. In the former $\frac{d \theta^{(0)}}{d \lambda}\!=\!0$, while for the latter $\frac{d \theta^{(0)}}{d \lambda}$ is non-zero, but we assume that it is {\em (sub)linearly sparse}\footnote{It means that for any $\dim(\theta)$-dim vector $v$, the product $v^\top \frac{d \theta^{(0)}}{d \lambda}$ can be computed in $O(\dim(\theta)\!+\!\dim(\lambda))$ time and space.}, and can be efficiently computed (e.g., in the MAML-type problems $\theta^{(0)}\!=\!\lambda$, and $\frac{d \theta^{(0)}}{d \lambda}\!=\!I$). 
We also let $B$ be the the number of burn-in iterations before convergence. Therefore, we run the SGLD recurrence for $(B\!+\!M)$ iterations in total, throwing away the first $B$ iterates, and collecting the last $M$ iterates. Following the notation in (\ref{eq:mc_estimate}), we set: $\theta^1\!:=\!\theta^{(B+1)}, \theta^2\!:=\!\theta^{(B+2)}, \dots, \theta^M\!:=\!\theta^{(B+M)}$.



Now, for the first recurrence $\theta^{(0)}\!\to\!\theta^{(1)}$, that is,
\begin{align}
\theta^{(1)} = \ \theta^{(0)} + \frac{\epsilon}{2} \frac{\partial \log p(\theta^{(0)}|\lambda)}{\partial \theta} + \sqrt{\epsilon} z^{(0)},
\label{eq:iter0to1}
\end{align}
by taking the gradient of both sides with respect to $\lambda$, 
\begin{align}
&\frac{d \theta^{(1)}}{d\lambda} = \ \frac{d \theta^{(0)}}{d\lambda} + \frac{\epsilon}{2} \frac{d}{d \lambda}\frac{\partial \log p(\theta^{(0)}|\lambda)}{\partial \theta} + \frac{d}{d \lambda} \sqrt{\epsilon} z^{(0)} \label{eq:dth1_dlamb_1} \\
&= \frac{d \theta^{(0)}}{d\lambda}\!+\!\frac{\epsilon}{2} \bigg( \frac{\partial^2 \log p(\theta^{(0)}|\lambda)}{\partial \lambda \partial \theta}\!+\!\frac{\partial^2 \log p(\theta^{(0)}|\lambda)}{\partial \theta^2} \cdot \frac{d \theta^{(0)}}{d\lambda} \bigg) \nonumber \\ 
& = \Big(I + \frac{\epsilon}{2} A(\lambda,\theta^{(0)})\Big) \cdot \frac{d \theta^{(0)}}{d\lambda} + \frac{\epsilon}{2} B(\lambda,\theta^{(0)}), 
\label{eq:dth1_dlamb_2}
\end{align}
where in (\ref{eq:dth1_dlamb_2}) we let 
\begin{align}
A(\lambda,\theta)\!:=\!\frac{\partial^2 \log p(\theta|\lambda)}{\partial \theta^2}, \ B(\lambda,\theta)\!:=\!\frac{\partial^2 \log p(\theta|\lambda)}{\partial \lambda \partial \theta}. 
\end{align}
We move on to the next recurrence $\theta^{(1)}\!\to\!\theta^{(2)}$ and derive its derivatives similarly. Continuing this up to $\theta^{(m)}$, we have: 
\begin{align}
\frac{d \theta^{(m)}}{d\lambda} &\simeq \bigg(I\!+\!\frac{\epsilon}{2} \Big(A(\lambda,\theta^{(0)})\!+\!\cdots\!+\!A(\lambda,\theta^{(m\!-\!1)}) \Big) \bigg) \cdot \frac{d \theta^{(0)}}{d\lambda} \nonumber \\
& \ \ \ \ \ \ + \ \frac{\epsilon}{2} \Big( B(\lambda,\theta^{(0)})\!+\!\cdots\!+\!B(\lambda,\theta^{(m\!-\!1)}) \Big). 
\label{eq:dthm_dlamb_1} 
\end{align}
The symbol $\simeq$ originates from exploitation of $\epsilon^2 \simeq 0$, and the detailed derivations for (\ref{eq:dthm_dlamb_1}) can be found in Supplement~\ref{appsec:dth_dlamb_recurr}. 
However, storing even a single $A(\lambda,\theta)$ or $B(\lambda,\theta)$ is computationally infeasible for large-scale scenarios due to its huge matrix dimension  $(\dim(\theta)\!\times\!\dim(\theta))$ or $(\dim(\theta)\!\times\!\dim(\lambda))$.

Instead, we derive a recursion directly from (\ref{eq:df_dlambda}). Specifically, we focus on the second term of (\ref{eq:df_dlambda}), denoted as $g_m(\lambda)$, 
\begin{align}
\frac{d f(\lambda, \theta^{(m)})}{d \lambda}\!=\!\frac{\partial f(\lambda, \theta^{(m)})}{\partial \lambda}\!+\!\underbrace{
\frac{\partial f(\lambda, \theta^{(m)})}{\partial \theta} \cdot \frac{d \theta^{(m)}}{d \lambda}
}_{=: g_m(\lambda)}.
\label{eq:df_dlambda_with_g}
\end{align}
Note that $g_m(\lambda)$ is a $\dim(\lambda)$-dimensional vector, thus easy to deal with computationally. Now we derive the recursion for $g_m(\lambda)$ as follows:
\begin{align}
& g_m(\lambda) = \frac{\partial f(\lambda, \theta^{(m)})}{\partial \theta} \cdot \frac{d \theta^{(m)}}{d \lambda} \label{eq:gm_recursive_1} \\
& = \frac{\partial f(\lambda, \theta^{(m)})}{\partial \theta} \cdot \Bigg(
\frac{d \theta^{(m\!-\!1)}}{d \lambda} \ + \nonumber \\
&\ \ \ \ \ \ \ \ \frac{\epsilon}{2} \bigg(
B(\lambda,\theta^{(m\!-\!1)}) + A(\lambda,\theta^{(m\!-\!1)}) \cdot \frac{d \theta^{(m\!-\!1)}}{d \lambda} \bigg)
\Bigg) \label{eq:gm_recursive_2} \\
& \simeq \frac{\partial f(\lambda, \theta^{(m)})}{\partial \theta} \cdot \Bigg(
\frac{d \theta^{(m\!-\!1)}}{d \lambda} \ + \nonumber \\
&\ \ \ \ \ \ \ \ \frac{\epsilon}{2} B(\lambda,\theta^{(m\!-\!1)}) 
+ \frac{\epsilon}{2} A(\lambda,\theta^{(m\!-\!1)}) \cdot \frac{d \theta^{(0)}}{d \lambda}
\Bigg) \label{eq:gm_recursive_3} \\
& = \underbrace{
\frac{\partial f(\lambda, \theta^{(m)})}{\partial \theta} \cdot \frac{d \theta^{(m\!-\!1)}}{d \lambda}
}_{\approx \ 
g_{m-1}(\lambda) \ + \ 
(\theta^{(m)}\!-\!\theta^{(m\!-\!1)}) \cdot \frac{\partial^2 f(\lambda, \theta^{(m\!-\!1)})}{\partial \theta^2} \cdot \frac{d \theta^{(0)}}{d \lambda}
} \ + \label{eq:gm_recursive_4}  \\
&\ \ \frac{\epsilon}{2} \frac{\partial f(\lambda, \theta^{(m)})}{\partial \theta}\!\cdot\!\Bigg( 
B(\lambda,\theta^{(m\!-\!1)}) + A(\lambda,\theta^{(m\!-\!1)}) \cdot \frac{d \theta^{(0)}}{d \lambda} \Bigg) \nonumber 
\end{align}
%
To have (\ref{eq:gm_recursive_3}) from (\ref{eq:gm_recursive_2}) we again exploit $\epsilon^2\!\simeq\!0$ using the expression (\ref{eq:dthm_dlamb_1}). 
In the under-brace of (\ref{eq:gm_recursive_4}) we use a 
first-order approximation where the details and justification can be found in Supplement~\ref{appsec:foapprox}. 
%
The quality of our first-order approximation is also empirically demonstrated in Sec.~\ref{sec:foapprox_goodness}. 

In summary, 
we have the recursion for $g_m(\lambda)$:
\begin{align}
&g_m(\lambda) \ = \ g_{m\!-\!1}(\lambda) \ + \label{eq:gm_recursive_final} \\
& \ \ \ \ (\theta^{(m)}\!-\!\theta^{(m\!-\!1)}) \cdot \frac{\partial^2 f(\lambda, \theta^{(m\!-\!1)})}{\partial \theta^2} \cdot \frac{d \theta^{(0)}}{d \lambda} \ + \nonumber \\
& \ \ \ \ \frac{\epsilon}{2} \frac{\partial f(\lambda, \theta^{(m)})}{\partial \theta}\!\cdot\!\Bigg( 
B(\lambda,\theta^{(m\!-\!1)}) + A(\lambda,\theta^{(m\!-\!1)}) \cdot \frac{d \theta^{(0)}}{d \lambda} \Bigg). \nonumber
\end{align}
In (\ref{eq:gm_recursive_final}), we can see that it is not needed to store the large matrices $A$ and $B$ at all, since we have the vector-Hessian product forms. They can be easily computed in most modern deep learning libraries with the auto-differentiation capability. In \texttt{PyTorch}, for instance, $\frac{\partial f}{\partial\theta}\!\cdot\!B$ in (\ref{eq:gm_recursive_final}) can be obtained by calling \texttt{autograd.grad()} with $t_1\!=\!\frac{\partial}{\partial \theta} \log p(\theta^{(m\!-\!1)}|\lambda)$ and $t_2\!=\!\frac{\partial}{\partial \theta} f(\lambda, \theta^{(m)})$ as:
\begin{align}
\textrm{
\texttt{grad}$\big(t_1, 
\ \lambda, \ $ \texttt{grad\_outputs}$\ = 
t_2 \big)$, 
}
\label{eq:autograd}
\end{align}
recalling that $B(\lambda,\theta) := \frac{\partial^2}{\partial \lambda \partial \theta} \log p(\theta|\lambda)$.  
We can compute $\frac{\partial f}{\partial\theta}\!\cdot\!A$ 
and  $(\theta^{(m)}\!-\!\theta^{(m\!-\!1)}) \cdot \frac{\partial^2 f(\lambda, \theta^{(m\!-\!1)})}{\partial \theta^2}$ 
in a similar manner.  
Overall the computation for (\ref{eq:gm_recursive_final}) requires only linear time and memory complexity in $\dim(\theta)$ and $\dim(\lambda)$ as we prove in Sec.~\ref{sec:complexity_analysis}. 
The initial $g_0(\lambda)$  can be easily computed\footnote{
Specifically, $g_0\!=\!0$ if $\theta^{(0)}$ is independent of $\lambda$. If $\theta^{(0)}\!=\!\lambda$ in the MAML-type initial parameter meta-learning problems, then $g_0\!=\!\frac{\partial f(\lambda, \theta^{(0)})}{\partial \theta}$, and can be easily computed.  
}. 
Once $g_m(\lambda)$ is computed, we can plug it back into (\ref{eq:df_dlambda_with_g}) and (\ref{eq:dEf_dlambda}) to compute the ultimate hypergradient. 

\noindent$\clubsuit$~\textbf{Summary.} 
Our hypergradient $h := \frac{d}{d\lambda} \mathbb{E}_{p(\theta|\lambda)}[f(\lambda,\theta)]$ can be approximately computed by the recursion: 
Initially we have $g_0(\lambda)$ and $h\!=\!0$, and for $m=1,\dots,B\!+\!M$,
\begin{align}
&\bullet \theta^{(m)}\!=\!\theta^{(m\!-\!1)} \!+\! \frac{\epsilon}{2} \frac{\partial \log p(\theta^{(m\!-\!1)}|\lambda)}{\partial \theta} \!+\! \sqrt{\epsilon} z^{(m-1)} \label{eq:hgrad_1} \\
&\bullet \textrm{Apply $g_{m\!-\!1}(\lambda) \to g_m(\lambda)$ recursion step in (\ref{eq:gm_recursive_final})} \nonumber \\
&\bullet \textrm{If $m\!>\!B$,}\ \ h \leftarrow h + \frac{1}{M} \bigg( \frac{\partial f(\lambda, \theta^{(m)})}{\partial \lambda}\!+\!g_m(\lambda) \bigg) \label{eq:hgrad_3}
%
\end{align}
Our proposed method is summarized as pseudocode in Alg.~\ref{alg:main}. Unlike the IFT methods~\cite{ift,imaml}, our approach does not require Hessian inversion (e.g., the inverse of $\frac{\partial^2}{\partial \theta^2} \log p(\theta|\lambda)$). 
Furthermore, the IFT methods are all reliant to the zero inner gradient condition, thus can be sensitive (in an unpredictable way) to the quality of the inner optimization. 

Besides, when the inner optimization can only be performed with stochastic minibatch-based updates (true for most deep learning scenarios), then this further deteriorates IFT's performance since the gradient vanishing condition would not hold for the minibatch version almost surely. 
On the other hand, in our approach, note that the gradient $\frac{\partial \log p(\theta|\lambda)}{\partial \theta}$ can be safely replaced by the minibatch stochastic gradient, as we resort to the stochastic-gradient MCMC~\cite{sgld}. This can lead to the same guarantee of the convergence to the samples from $p(\theta|\lambda)$.

\newcommand\inlineeqno{\stepcounter{equation}\ (\theequation)}
\newcommand{\INDSTATE}[1][1]{\STATE\hspace{#1\algorithmicindent}}
\begin{algorithm}[t!]
\caption{Proposed algorithm (HPO-SGLD) 
to solve (\ref{eq:stoch_optim}).
}
\label{alg:main}
\begin{footnotesize}
\begin{algorithmic}
\STATE \textbf{Input:} Problem functions $f()$ and $E()$ in (\ref{eq:stoch_optim}). \\
\ \ \ \ \ \ \ \ \ \ \ \ $B$ ($\#$ burn-in steps) and $M$ ($\#$ MC samples).
\STATE \textbf{Initialize:} $\lambda$.
\STATE For each outer iteration ($\lambda$):
    \INDSTATE[1] Choose $\theta^{(0)}$, compute $g_0(\lambda)$ and set $h=0$.
    \INDSTATE[1] For $m=1,\dots,B\!+\!M$:
        \INDSTATE[2] Perform (\ref{eq:hgrad_1}), (\ref{eq:gm_recursive_final}), and (\ref{eq:hgrad_3}) in this order. 
    \INDSTATE[1] (Now we have hypergradient $h=\frac{d}{d\lambda} \mathbb{E}_{p(\theta|\lambda)}[f(\lambda,\theta)]$)
    \INDSTATE[1] Update $\lambda \leftarrow \lambda - \eta \cdot h$ for some step size $\eta$.
\end{algorithmic}
\end{footnotesize}
\end{algorithm}

\subsection{Convergence Analysis $\&$ Complexity Analysis}\label{sec:complexity_analysis}

\textbf{Convergence Analysis.} 
If $\theta^{(0)}$ is independent of $\lambda$, we show that: i) (Theorem~\ref{thm:first_order_err}) our first-order approximation is {\em exact} in the infinitesimal sense, and from which we prove that ii) (Theorem~\ref{thm:convergence}) our HPO-SGLD algorithm converges to an optimal solution at linear convergence rate. That is, to achieve $\epsilon$-accuracy, 
we need $O(1/\epsilon)$ number of iterations. The details and proofs can be found in Supplement~\ref{appsec:convergence}. 



\begin{theorem}[Exactness of First-order Approximation] 
If (A1) and (A2) in Supplement~\ref{appsec:convergence} hold, then our first-order approximation used for $g_m(\lambda)$ recursion becomes exact (i.e., approximation error $\simeq 0$). 
\label{thm:first_order_err}
\end{theorem}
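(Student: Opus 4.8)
The plan is to show that the discrepancy between the left-hand side of the under-braced approximation in (\ref{eq:gm_recursive_4}) and its claimed first-order surrogate is of order $\epsilon^{1+\alpha}$ with $\alpha\geq 1/2$, hence $\simeq 0$ under the standing convention (fixed right after (\ref{eq:sgld})) that $\epsilon^{1.5}\simeq 0$. The whole argument hinges on one structural fact that follows from $\theta^{(0)}$ being independent of $\lambda$: the Jacobian $J_m := \frac{d\theta^{(m)}}{d\lambda}$ is small, namely $\|J_m\| = O(\epsilon)$ for every finite $m$.

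First I would establish this order bound on $J_m$. Differentiating the SGLD update (\ref{eq:sgld}) exactly with respect to $\lambda$ --- noting that the noise $z^{(m-1)}$ does not depend on $\lambda$ --- gives the exact recursion $J_m = \big(I + \frac{\epsilon}{2}A(\lambda,\theta^{(m-1)})\big)J_{m-1} + \frac{\epsilon}{2}B(\lambda,\theta^{(m-1)})$. Since $\theta^{(0)}$ independent of $\lambda$ forces $J_0 = 0$, a short induction --- using (A2) to keep $A$ and $B$ bounded over the finite horizon of $B+M$ steps --- yields $\|J_m\| = O(\epsilon)$, with leading term $\frac{\epsilon}{2}\sum_{k<m}B(\lambda,\theta^{(k)})$.

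Next I would quantify the two remaining error sources. From the SGLD step itself the increment is dominated by the diffusion term, so $\theta^{(m)}-\theta^{(m-1)} = \frac{\epsilon}{2}\frac{\partial \log p}{\partial\theta} + \sqrt{\epsilon}\,z^{(m-1)} = O(\sqrt{\epsilon})$. Taylor-expanding the gradient $\frac{\partial f(\lambda,\theta^{(m)})}{\partial\theta}$ about $\theta^{(m-1)}$ (which is where (A1) enters, furnishing a controlled second-order remainder $r_m = O(\|\theta^{(m)}-\theta^{(m-1)}\|^2) = O(\epsilon)$) and then right-multiplying by $J_{m-1}$ exhibits the left-hand side as $g_{m-1}(\lambda) + (\theta^{(m)}-\theta^{(m-1)})^\top \frac{\partial^2 f(\lambda,\theta^{(m-1)})}{\partial\theta^2} J_{m-1} + r_m J_{m-1}$. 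The claimed surrogate is the same expression but with $J_{m-1}$ replaced by $J_0$ in the Hessian term; since $J_0=0$, the surrogate collapses to exactly $g_{m-1}(\lambda)$. Subtracting, the error equals $(\theta^{(m)}-\theta^{(m-1)})^\top \frac{\partial^2 f}{\partial\theta^2} J_{m-1} + r_m J_{m-1}$, which is $O(\sqrt{\epsilon})\cdot O(\epsilon) + O(\epsilon)\cdot O(\epsilon) = O(\epsilon^{1.5})$, hence $\simeq 0$.

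The main obstacle I anticipate is not any individual estimate but the bookkeeping that makes the $O(\epsilon)$ bound on $J_m$ uniform across the $B+M$ iterations: the recursion multiplies by $(I+\frac{\epsilon}{2}A)$ at each step, so the implied constant grows like $(1+\frac{\epsilon}{2}\|A\|)^m$, and I must use (A2) together with the finiteness of the horizon to keep this bounded --- otherwise the per-step $O(\epsilon^{1.5})$ error could accumulate into something of order $\epsilon$ or larger. I would also be careful to read \emph{exact in the infinitesimal sense} as dropping all terms of order $\epsilon^{1+\alpha}$, $\alpha\geq 1/2$; once that convention is in force, the two leftover terms above are precisely of the discarded order, so the first-order approximation is exact as claimed.
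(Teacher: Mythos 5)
Your proposal is correct and follows essentially the same route as the paper's own proof: a Taylor expansion of the gradient product about $\theta^{(m-1)}$, combined with the Jacobian bound $\frac{d\theta^{(m-1)}}{d\lambda}=\frac{d\theta^{(0)}}{d\lambda}+O(\epsilon)=O(\epsilon)$ (from (A1) and the differentiated SGLD recursion) and order counting with $\theta^{(m)}-\theta^{(m-1)}=O(\sqrt{\epsilon})$, so that every residual term is $O(\epsilon^{1.5})\simeq 0$ under (A2). The only slip is cosmetic: the controlled second-order Taylor remainder comes from smoothness of $f$ together with (A2), not from (A1), which you correctly invoke only to force $J_0=0$.
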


\begin{theorem}[Convergence of HPO-SGLD]
If (A1-A3) in Supplement~\ref{appsec:convergence} hold, and if the target function $f(\lambda,\theta)$ in Eq.(\ref{eq:stoch_optim}) is Lipschitz continuous in $\lambda$ for each $\theta$, then for a sufficiently large number ($M$) of the Monte Carlo samples, our HPO-SGLD algorithm (Alg.~\ref{alg:main}) converges to an optimal solution of the stochastic optimization Eq.(\ref{eq:stoch_optim}) at linear rate. More specifically, with constant step size $\eta$ that satisfies $\eta \leq \frac{1}{L}$ ($L$ is a Lipschitz constant of $f$), the following holds:
\begin{align}
&\Big| \mathbb{E}_{p(\theta|\lambda^{(k)})}\big[f(\lambda^{(k)},\theta)\big] - \mathbb{E}_{p(\theta|\lambda^*)}\big[f(\lambda^*,\theta)\big] \Big| \ \leq \nonumber \\ 
& \ \ \ \ \ \ \ \ \ \ \ \ \ \ \ \ \ \ \ \ \ \ \ \ \ \ \ \ \ \ \ \ \ \ \ \ \ \ \ \ \ \ \ \frac{||\lambda^{(0)} - \lambda^*||_2^2}{2\eta k} \ = \ O\Big(\frac{1}{k}\Big)
\end{align}
where $\lambda^{(k)}$ is the iterate at the $k$-th outer iteration, and $\lambda^*$ is an optimal solution. 
\label{thm:convergence}
\end{theorem}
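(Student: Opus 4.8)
The plan is to recognize that, once the hypergradient machinery of Alg.~\ref{alg:main} is granted, the outer loop is nothing but exact gradient descent on the scalar objective $F(\lambda) := \mathbb{E}_{p(\theta|\lambda)}[f(\lambda,\theta)]$, and then to invoke the classical $O(1/k)$ analysis of gradient descent for a convex, smooth $F$. Concretely, I would first argue that the direction $h$ produced by the recursion \eqref{eq:hgrad_1}--\eqref{eq:hgrad_3} equals the true gradient $\nabla_\lambda F(\lambda^{(k)})$. This rests on three facts: (i) Theorem~\ref{thm:first_order_err}, which makes the first-order approximation in \eqref{eq:gm_recursive_final} exact in the infinitesimal ($\epsilon\to 0$) regime, so that the $g_m$ recursion reproduces the exact total derivative $\tfrac{df(\lambda,\theta^{(m)})}{d\lambda}$; (ii) SGLD stationarity, so that after the $B$ burn-in steps the iterates $\theta^{(B+1)},\dots,\theta^{(B+M)}$ are distributed according to $p(\theta|\lambda)$; and (iii) the law of large numbers, so that for sufficiently large $M$ the Monte-Carlo average \eqref{eq:dEf_dlambda} concentrates on $\mathbb{E}_{p(\theta|\lambda)}\!\big[\tfrac{df}{d\lambda}\big] = \nabla_\lambda F(\lambda)$, the interchange of $\nabla_\lambda$ and the expectation being licensed by (A1)--(A3). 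This reduces the algorithm to the deterministic iteration $\lambda^{(k+1)} = \lambda^{(k)} - \eta\,\nabla F(\lambda^{(k)})$.

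Second, I would run the textbook descent argument, reading the hypothesis ``$L$ is a Lipschitz constant of $f$'' as $L$-smoothness of $F$ (i.e.\ $\nabla F$ is $L$-Lipschitz), which is precisely what the step-size condition $\eta\le 1/L$ signals. From $L$-smoothness and the update rule one obtains the descent inequality $F(\lambda^{(k+1)}) \le F(\lambda^{(k)}) - \tfrac{\eta}{2}\|\nabla F(\lambda^{(k)})\|^2$ (using $\eta\le 1/L$), which already gives monotone decrease. Convexity of $F$ (supplied by (A1)--(A3)) gives $F(\lambda^{(k)}) - F(\lambda^*) \le \nabla F(\lambda^{(k)})^\top(\lambda^{(k)}-\lambda^*)$. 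Combining the two yields the one-step contraction $F(\lambda^{(k+1)}) - F(\lambda^*) \le \tfrac{1}{2\eta}\big(\|\lambda^{(k)}-\lambda^*\|^2 - \|\lambda^{(k+1)}-\lambda^*\|^2\big)$.

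Third, I would telescope this inequality over $k$ iterations. The right-hand side collapses to $\tfrac{1}{2\eta}\|\lambda^{(0)}-\lambda^*\|^2$, and monotone decrease lets me lower-bound the sum on the left by $k\big(F(\lambda^{(k)})-F(\lambda^*)\big)$, giving $k\big(F(\lambda^{(k)})-F(\lambda^*)\big) \le \tfrac{1}{2\eta}\|\lambda^{(0)}-\lambda^*\|^2$, i.e.\ the claimed $\frac{\|\lambda^{(0)}-\lambda^*\|_2^2}{2\eta k}$ bound. Since $\lambda^*$ is a minimizer, $F(\lambda^{(k)}) - F(\lambda^*) \ge 0$, so the difference equals its own absolute value and the stated inequality, with its $O(1/k)$ rate, follows.

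The main obstacle is Step one, not the optimization boilerplate of Steps two and three. The delicate point is justifying that the recursively computed $h$ really is the exact gradient of $F$: this requires the $\epsilon$-exactness of Theorem~\ref{thm:first_order_err}, the validity of differentiating under the integral sign through the $\lambda$-dependent density $p(\theta|\lambda)$ (whose normalizer $Z(\lambda)$ is itself $\lambda$-dependent), and control of the finite-$M$ Monte-Carlo error---which is exactly why the statement only claims convergence ``for a sufficiently large number $M$.'' A secondary caveat I would flag is that the stated bound is the \emph{exact-gradient} rate; at finite $M$ one incurs an additional stochastic-gradient term (of order $O(1/\sqrt{M})$ or $O(1/M)$) that the ``sufficiently large $M$'' hypothesis is implicitly absorbing.
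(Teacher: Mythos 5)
Your Step one matches the paper's proof: the paper likewise reduces the claim to Theorem~\ref{thm:first_order_err}, arguing that when $\frac{d\theta^{(0)}}{d\lambda}=0$ the recursively computed hypergradient is exact, so the outer loop becomes gradient descent on $F(\lambda):=\mathbb{E}_{p(\theta|\lambda)}[f(\lambda,\theta)]$ with exact gradients (finite-$M$ Monte-Carlo error being absorbed by the ``sufficiently large $M$'' hypothesis, as you note). Where you and the paper part ways is in how that deterministic iteration is then analyzed, and this is where your argument has a genuine gap.

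You derive the displayed bound via the textbook convex analysis: descent lemma, the convexity inequality $F(\lambda^{(k)})-F(\lambda^*)\le \nabla F(\lambda^{(k)})^{\top}(\lambda^{(k)}-\lambda^*)$, and telescoping. That chain is valid only if $F$ is convex, and you assert that convexity is ``supplied by (A1)--(A3).'' It is not: (A1) is independence of $\theta^{(0)}$ from $\lambda$, (A2) is the infinitesimal step size $\epsilon^{1+\alpha}\simeq 0$, and (A3) is convergence of SGLD to the stationary distribution. None of these constrains the shape of $F$ as a function of $\lambda$, and in the intended applications (neural-network losses) $F$ is generally nonconvex, so your one-step contraction toward the \emph{global} optimum $\lambda^*$ rests on a hypothesis you do not have. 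The paper takes a different route precisely to avoid this: it invokes Theorem 2.1 of \cite{ghadimi_lan}, which needs only that $F$ be bounded below, that $\nabla F$ be $L$-Lipschitz, and that the gradient variance be bounded, and the paper explicitly states that convexity of $F$ is \emph{not} required. To be fair, that citation buys nonconvexity at a price: a Ghadimi--Lan-type guarantee is a stationarity bound (on gradient norms), not the global optimality-gap inequality displayed in the theorem, so the stated $\frac{\|\lambda^{(0)}-\lambda^*\|_2^2}{2\eta k}$ bound is really only obtainable by an argument like yours. In that sense your proof and the paper's each fill the other's hole: yours proves the stated inequality but smuggles in convexity; the paper's avoids convexity but does not actually deliver the stated inequality. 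A smaller shared issue: both you and the paper silently upgrade the hypothesis ``$f$ is Lipschitz continuous in $\lambda$'' to $L$-smoothness of $F$ (Lipschitz \emph{gradient}), which is a different and stronger condition, and is what the step-size rule $\eta\le 1/L$ actually requires.
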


\textbf{Deterministic Counterpart.} 
Our derivation in a {\em deterministic} setting, namely if we removed all stochastic components by dropping SGLD noise terms $\sqrt{\epsilon} z$ in (\ref{eq:hgrad_1}), looks very similar to  forward-mode differentiation (FMD). The main difference is that we proposed the recurrence on $g_m$ with the first-order approximation in (\ref{eq:gm_recursive_4}), 
allowing us to circumvent saving the Hessian matrices $A$ and $B$. The FMD, on the other hand, 
without such approximation, 
needs to save these matrices, making it infeasible especially for high-dim $\lambda$ cases. This is the crucial point that enables our algorithm to be practical in real-world deep learning situations. 
In Supplement~\ref{appsec:vs_fmd} we provide more detailed views on this claim, scrutinizing our deterministic version, the FMD algorithms, and the similarity between the two. 

\textbf{Computational Complexity.} 
We analyze the time and space complexity of the proposed algorithm and contrast it to FMD algorithms as well as the reverse-mode differentiation (RMD). 
The analysis is based on the techniques from algorithmic differentiation~\cite{alg_diff1,alg_diff2,fmd_rmd}. 
The complexity of the competing algorithms is summarized in Tab.~\ref{tab:complexity}, and we leave details of our analysis in Supplement~\ref{appsec:complexity}.

\begin{table}
\setlength{\tabcolsep}{3.4pt}
\vspace{-1.0em}
\caption{Time/space complexity. 
$T=$ $\#$ inner iterations. 
}
\vspace{-0.8em}
\centering
\begin{scriptsize}
\centering
\begin{tabular}{ccc}
\toprule
 & Time & Space
\\
\hline
FMD\Tstrut & $O(T \cdot \dim(\lambda) \cdot (\dim(\theta)\!+\!\dim(\lambda)))$ & $O(\dim(\theta)\!+\!\dim(\lambda))$ \\
RMD\Tstrut & $O(T\cdot(\dim(\theta)\!+\!\dim(\lambda)))$ & $O(T\cdot(\dim(\theta)\!+\!\dim(\lambda)))$ \\
Ours\Tstrut & $O(T\cdot(\dim(\theta)\!+\!\dim(\lambda)))$ & $O(\dim(\theta)\!+\!\dim(\lambda))$ \\
\bottomrule
\end{tabular}
%
\end{scriptsize}
\label{tab:complexity}
\vspace{-1.0em}
\end{table}

\section{Related Work}\label{sec:related_work}

Existing methods for hypergradient estimation in BLO can fall into two categories: {\em IFT-based} and {\em Unroll-based}. We highlight the key ideas and known issues of these approaches here, and for more comprehensive summaries we refer the readers to survey papers~\cite{blo_survey,liu2021investigatingBLO}. 
IFT hinges on the stationary condition of the inner optimal solution, and computes the hypergradient from this implicit definition. Two popular variants differ in how to approximate the Hessian inversion required in IFT -- by Neumann series approximation~\cite{ift} and  conjugate gradient ~\cite{imaml} respectively. Since both variants are reliant on the gradient-equal-to-0 stationarity condition and involve Hessian inversion, their overall performance is sensitive to the quality of the inner optimization in an unpredictable, highly nonlinear way. 

Unroll-based approaches basically approximate the inner optimal solution by an SGD iterate at some final (finite) step. So we have a chain of dependency constraints, on which we can compute the hypergradients by the chain rule. The two different ways of applying the chain rule correspond to FMD and RMD~\cite{fmd_rmd}. Despite its simplicity, RMD suffers from excessive memory usage to maintain a large computation graph built for the unrolled inner optimization, which in turn leads to truncated approximations \cite{luketina2016scalableRegularization} that bring problems of their own \cite{metz2019understanding}. 
FMD also requires large computational resources to store huge Jacobian matrices and does not scale to high-dimensional $\lambda$ \cite{micaelli2021gradient}. Although our derivation looks similar to FMD in some respects, our new approximate recurrence  circumvents FMD's inherent overhead. Other gradient-free strategies for solving BLOs such as evolution ~\cite{taylorglo} also do not scale to high-dimensional $\lambda$.

In our stochastic optimization we transform the inner loss function into a probability distribution via energy-based modeling (EBM). Previously, there were attempts to apply SGLD to EBM sampling~\cite{ebm1,ebm2}. However, there approaches have nothing to do with {\em meta learning}, and our novelty is to bring an EBM perspective to meta-learning and hyperparameter optimization.

\section{Experiments}\label{sec:expmts}

\subsection{Synthetic 1D Problem}\label{sec:expmt_synth}

We consider a simple BLO problem where in (\ref{eq:blo}) we define 
\begin{align}
\mathcal{L}_V\!(\lambda,\!\theta)\!=\!(\lambda\!-\!\theta)^2\!+\!\Big(\theta\!-\!\frac{1}{2}\Big)^2, \ 
\mathcal{L}_T(\lambda,\theta)\!=\!\frac{\theta^3}{3}\!-\!(1\!-\!\lambda^2)\theta
\nonumber
\end{align}
in $\lambda,\theta\!\in\![0,1]$. The inner loss function admits a closed-form global minimum $\theta^*(\lambda)\!=\!\sqrt{1\!-\!\lambda^2}$. And by plugging this in the outer loss function and with some 1D line search, we have the optimal solution $\lambda^*\!=\!0.7487$ at $\theta^*\!=\!0.6629$. 
We tackle this problem using the hypergradient methods, and the results summarized in Table~\ref{tab:synth1d}. For all competing methods we run up to $100$ inner iterations while in our HPO-SGLD, we split the inner iterations into $B\!=\!50$ burn-in steps and $M\!=\!50$ MC sample accumulation steps, to be fair. Other experimental details are summarized in Supplement~\ref{appsec:expmt_synth}. 

We see that both IFT-Neumann/CG and our HPO-SGLD are equally good, accurately identifying the true optimal values. On the other hand, RMD and FMD require more inner iterations as they only reach comparable errors at $1000$ inner iterations.  
We also vary the number of inner loop iterations, and the results are shown in Fig.~\ref{fig:synth1d_inner_iters}. 

\begin{table}[t!]
\setlength{\tabcolsep}{9.7pt}
\vspace{-1.0em}
\caption{Synthetic 1D problem. Test errors. 
}
\vspace{-0.8em}
\centering
\begin{scriptsize}
\centering
\begin{tabular}{ccc}
\toprule
 & $\lambda^*$ (Error) & $\theta^*$ (Error)
\\
\hline
True from line search\Tstrut & 0.7487 (-) & 0.6629 (-) \\
HPO-SGLD (Ours)\Tstrut & 0.7488 (0.0001) & 0.6629 (0.0000) \\
IFT-Neumann\Tstrut & 0.7491 (0.0004) & 0.6625 (0.0004) \\
IFT-CG\Tstrut & 0.7488 (0.0001) & 0.6628 (0.0001) \\
RMD\Tstrut & 0.7360 (0.0127) & 0.6770 (0.0141) \\
RMD-FO\Tstrut & 0.6916 (0.0572) & 0.7228 (0.0599) \\
FMD\Tstrut & 0.7360 (0.0127) & 0.6770 (0.0141) \\
\hline
RMD (1000 inner iterations)\Tstrut & 0.7489 (0.0002) & 0.6627 (0.0002) \\
RMD-FO (1000 inner iterations)\Tstrut & 0.6916 (0.0572) & 0.7226 (0.0597) \\
FMD (1000 inner iterations)\Tstrut & 0.7489 (0.0002) & 0.6627 (0.0002) \\
\bottomrule
\end{tabular}
%
\end{scriptsize}
\label{tab:synth1d}
\end{table}

\begin{figure}[t!]
\begin{center}
%
\centering
\includegraphics[trim = 2mm 3mm 3mm 3mm, clip, scale=0.28]{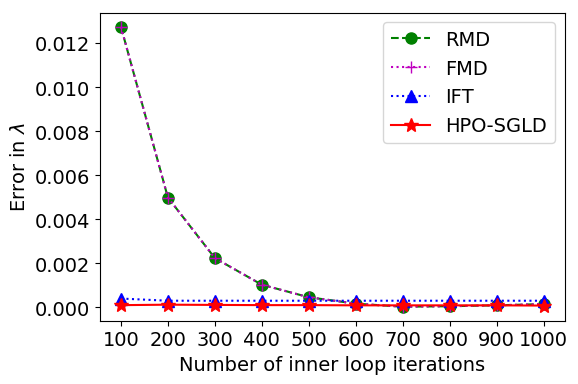} 
\includegraphics[trim = 2mm 3mm 3mm 3mm, clip, scale=0.28]{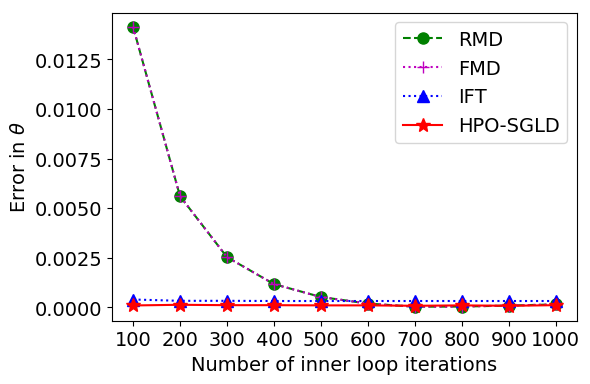} 
\end{center}
\vspace{-1.3em}
\caption{(Synthetic 1D) The number of inner iterations vs.~the errors of the learned solutions, 
(Left) $|\lambda-\lambda^*|$ and (Right) $|\theta-\theta^*|$. 
}
\label{fig:synth1d_inner_iters}
\end{figure}

\textbf{Noisy inner loss case.}
Now to make the problem more realistic and difficult, we modify the problem in a way that the inner loss function is randomly perturbed at every call. More specifically, the new inner loss function is defined as:
\begin{align}
\mathcal{L}_T(\lambda,\theta) = (1/3+\epsilon_1) \cdot \theta^3 - (1\!-\!\lambda^2+\epsilon_2)\cdot\theta
\label{eq:noisy_synth1d}
\end{align}
where $\epsilon_1,\epsilon_2\!\sim\!\textrm{Uniform}(-0.3,0.3)$. This modification makes the problem more realistic by mimicking the real-world situations where we often observe noise from various sources in the problem data (e.g., stochastic minibatch formation or noise in inputs and/or labels) in the inner loss function. 
In this case our stochastic optimization formulation (Sec.~\ref{sec:stoch_optim}) is expected to be particularly useful. The results summarized in Table~\ref{tab:noisy_synth1d} show that deterministic approaches like IFT-Neumann/CG are very sensitive to the noisy inner loss function, failing to attain the optimal values. This is mainly due to violation of the strict gradient-equal-to-0 condition for the implicit function theorem. On the other hand, our stochastic optimization treatment, for different degrees of stochasticity considered ($\tau\!=\!10^{\{-2,-5,-10\}}$), leads to more robust estimation of the optimal values. Considering the highly stochastic  nature of the problem, we see that incorporating more stochasticity (i.e., larger $\tau$) in our HPO-SGLD model, leads to more accurate estimation.


\begin{table}[t!]
\setlength{\tabcolsep}{9.7pt}
\vspace{-0.5em}
\caption{Noisy synthetic 1D problem. 
}
\vspace{-0.8em}
\centering
\begin{scriptsize}
\centering
\begin{tabular}{ccc}
\toprule
 & $\lambda^*$ (Error) & $\theta^*$ (Error)
\\
\hline
True from line search\Tstrut & 0.7487 (-) & 0.6629 (-) \\
HPO-SGLD $\tau\!=\!10^{-2}$ (Ours)\Tstrut & 0.7495 (0.0008) & 0.6516 (0.0113) \\
HPO-SGLD $\tau\!=\!10^{-5}$ (Ours)\Tstrut & 0.7473 (0.0014) & 0.6383 (0.0246) \\
HPO-SGLD $\tau\!=\!10^{-10}$ (Ours)\Tstrut & 0.7468 (0.0019) & 0.6411 (0.0218) \\
IFT-Neumann\Tstrut & 0.7640 (0.0153) & 0.6203 (0.0426) \\
IFT-CG\Tstrut & 0.7544 (0.0057) & 0.6318 (0.0311) \\
RMD\Tstrut & 0.7363 (0.0124) & 0.6779 (0.0150) \\
RMD-FO\Tstrut & 0.6923 (0.0564) & 0.7235 (0.0606) \\
FMD\Tstrut & 0.7364 (0.0123) & 0.6778 (0.0149) \\
\hline
RMD (1000 inner iterations)\Tstrut & 0.7517 (0.0030) & 0.6484 (0.0145) \\
RMD-FO (1000 inner iterations)\Tstrut & 0.6926 (0.0561) & 0.7075 (0.0446) \\
FMD (1000 inner iterations)\Tstrut & 0.7517 (0.0030) & 0.6484 (0.0145) \\
\bottomrule
\end{tabular}
%
\end{scriptsize}
\label{tab:noisy_synth1d}
\vspace{-1.0em}
\end{table}

\subsubsection{Goodness of First-Order Approximation}\label{sec:foapprox_goodness}

To demonstrate the quality of the first-order approximation that we introduced in (\ref{eq:gm_recursive_4}), that is, $\frac{\partial f(\lambda, \theta^{(m)})}{\partial \theta} \cdot \frac{d \theta^{(m\!-\!1)}}{d \lambda} \approx g_{m-1}(\lambda) + 
(\theta^{(m)}\!-\!\theta^{(m\!-\!1)}) \cdot \frac{\partial^2 f(\lambda, \theta^{(m\!-\!1)})}{\partial \theta^2} \cdot \frac{d \theta^{(0)}}{d \lambda}$, we record and compare the true product of gradient terms (left hand side) and our approximate values (right hand side). The results are visualized in Fig.~\ref{fig:synth1d_product_of_grads}, and we clearly see that the approximation quality is very good. 
We measure the errors from the beginning of our recurrences. Hence the final accumulated meta-gradient estimate would be even more accurate since we drop the meta-gradient estimates at the early iterations (relatively larger errors) as burn-in steps. 

\begin{figure}[t!]
\vspace{-1.0em}
\begin{center}
%
\centering
\includegraphics[trim = 3mm 0mm 15mm 15mm, clip, scale=0.225]{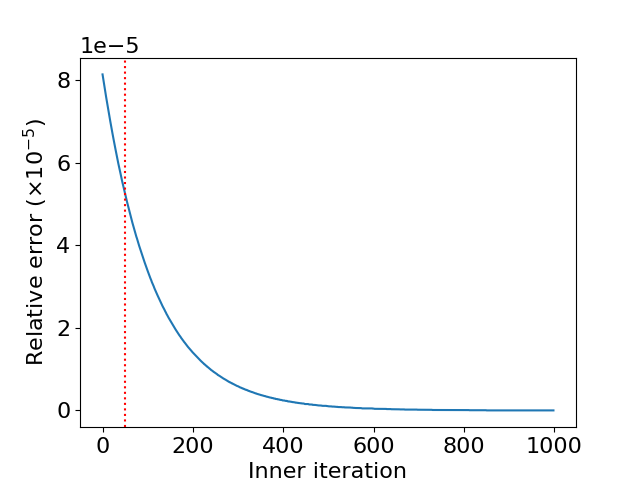} \ \ \ \ 
\includegraphics[trim = 3mm 0mm 15mm 15mm, clip, scale=0.225]{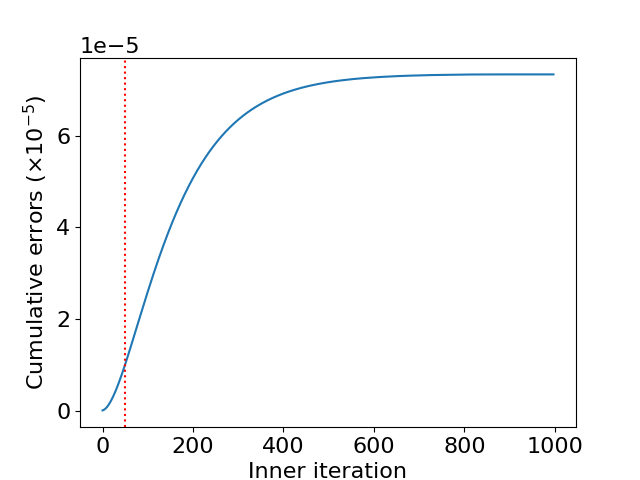}
\end{center}
\vspace{-1.5em}
\caption{
(Left) Relative errors between the true products of gradients vs.~our first-order approximates. (Right) Cumulative errors between true hypergradients and our estimates. 
Red/dotted lines indicate the end of the burn-in period. 
}
\vspace{-1.0em}
\label{fig:synth1d_product_of_grads}
\end{figure}


\subsection{L1-Regularizer HPO in ERM Learning}\label{sec:expmt_l1reg}

We test our method on the L1-regularized ERM training of a deep network as in (\ref{eq:blo_reg}). 
We consider individual weight L1 regularization, $R(\lambda,\theta)\!=\!\sum_j \lambda_j |\theta_j|$ in which $\lambda$ has the same structure as the main backbone $\theta$. 
Note that this BLO is not feasible to solve by a grid/discrete search due to the large number of hyperparameters to be searched. 
With the Vision Transformer (ViT-B-16)~\cite{vit} main network, we test the competing methods on the Oxford-Pets~\cite{pets}, DTD~\cite{dtd} and Flowers~\cite{flowers} datasets. 
The details of the experimental setups can be found in Supplement~\ref{appsec:expmt_l1reg}. 

The results are summarized in Table~\ref{tab:l1reg_hpo}. 
We see that IFT (both Neumann and CG Hessian inverse approximation) exhibits high variances for some option choices: for Neumann $(\alpha,i)$, the Hessian impact and the length of the Neumann series, respectively; for CG $(\gamma,i)$, the Hessian regularizer and the number of CG iterations, respectively. This may originate from the unknown and unstable behavior of the Hessian inverse approximation schemes especially when the gradient-equal-to-0 condition is violated. 
Our HPO-SGLD leads to relatively robust solutions for different choices of underlying options; $\tau$ and the noise scale factors ($\tau$ is the temperature parameter that turns the inner optimization problem to a probability distribution, and $\kappa$ is used for noise scaling in SGLD). That is, one of the benefits of the proposed approach is more stable final solutions, being less sensitive to underlying parameter choices, which originates from the underlying forward-mode differentiation and our stochastic treatment without relying on the gradient-equal-to-0 condition. Compared to RMD, our HPO-SGLD has significantly lower test errors. A main drawback of the RMD is its large GPU memory footprint in order to hold the full unrolled computation graph, and this considerably limits the number of inner iterations to be applied, which in turn can degrade performance. FMD is simply infeasible to run in this model scale.

\begin{table}[t!]
\setlength{\tabcolsep}{3.5pt}
\vspace{-1.0em}
\caption{L1Reg HPO results. Test errors ($\%$). 
}
\vspace{-0.8em}
\centering
\begin{scriptsize}
\centering
\begin{tabular}{ccccc}
\toprule
Methods\Tstrut & Options & Pets & DTD & Flowers
\\
\hline
 & $(0.9,1)$ & $28.07 \pm 0.65$ & $35.21 \pm 0.47$ & $15.30 \pm 0.19$ \\
 & $(0.9,3)$ & $29.47 \pm 1.13$ & $34.89 \pm 0.26$ & $15.60 \pm 0.35$ \\
IFT- & $(0.9,5)$ & $30.34 \pm 3.49$ & $35.34 \pm 0.81$ & $15.53 \pm 0.36$ \\
Neumann & $(0.9,10)$ & Diverged & Diverged & Diverged \\
$(\alpha,i)$ & $(0.99,1)$ & $28.31 \pm 0.58$ & $35.33 \pm 0.41$ & $15.43 \pm 0.31$ \\
 & $(0.99,3)$ & $28.59 \pm 1.13$ & $35.17 \pm 0.48$ & $15.81 \pm 0.49$ \\
 & $(0.99,5)$ & $28.32 \pm 0.48$ & $35.55 \pm 1.92$ & $15.62 \pm 0.35$ \\
 & $(0.99,10)$ & Diverged & Diverged & Diverged \\
\midrule
 & $(0.01,1)$ & $28.54 \pm 0.85$ & $34.80 \pm 0.31$ & $15.68 \pm 0.61$ \\
 & $(0.01,3)$ & $27.47 \pm 0.90$ & $34.91 \pm 0.17$ & $15.15 \pm 0.15$ \\
IFT- & $(0.01,5)$ & $28.22 \pm 1.90$ & $36.19 \pm 1.39$ & $15.43 \pm 0.27$ \\
CG & $(0.01,10)$ & $29.27 \pm 1.52$ & $35.05 \pm 0.13$ & $15.63 \pm 0.39$ \\
$(\gamma,i)$ & $(0.001,1)$ & $28.63 \pm 1.34$ & $34.91 \pm 0.47$ & $15.75 \pm 0.60$ \\
 & $(0.001,3)$ & $29.12 \pm 1.77$ & $35.44 \pm 0.33$ & $15.61 \pm 0.48$ \\
 & $(0.001,5)$ & $29.01 \pm 1.50$ & $35.14 \pm 0.25$ & $15.66 \pm 0.28$ \\
 & $(0.001,10)$ & $27.64 \pm 0.41$ & $35.64 \pm 0.61$ & $15.21 \pm 0.12$ \\
\midrule
\multicolumn{2}{c}{RMD}\Tstrut & $25.49 \pm 0.64$ & $36.36 \pm 0.48$ & $15.65 \pm 0.37$ \\
\multicolumn{2}{c}{RMD-FO}\Tstrut & $27.50 \pm 0.31$ & $36.26 \pm 1.06$ & $15.29 \pm 0.34$ \\
\midrule
 & $(0.01,0.01)$ & $24.80 \pm 0.32$ & $34.17 \pm 0.24$ & $15.12 \pm 0.16$ \\
 & $(0.005,0.01)$ & $23.75 \pm 0.21$ & $34.10 \pm 0.11$ & $14.86 \pm 0.10$ \\
HPO- & $(0.001,0.01)$ & $24.57 \pm 0.43$ & $33.99 \pm 0.48$ & $14.57 \pm 0.18$ \\
SGLD & $(0.0001,0.01)$ & $24.31 \pm 0.20$ & $33.86 \pm 0.15$ & $14.74 \pm 0.19$ \\
$(\tau,\kappa)$ & $(0.01,10^{-6})$ & $24.10 \pm 0.23$ & $33.60 \pm 0.21$ & $14.56 \pm 0.08$ \\
 & $(0.005,10^{-6})$ & $24.87 \pm 0.19$ & $33.53 \pm 0.46$ & $14.56 \pm 0.19$ \\
 & $(0.001,10^{-6})$ & $24.10 \pm 0.45$ & $33.97 \pm 0.28$ & $14.59 \pm 0.09$ \\
 & $(0.0001,10^{-6})$ & $24.06 \pm 0.36$ & $33.85 \pm 0.14$ & $14.67 \pm 0.30$ \\
\bottomrule
\end{tabular}
%
\end{scriptsize}
\label{tab:l1reg_hpo}
\vspace{-1.0em}
\end{table}
\subsection{Learning an Optimal Loss Function 
}\label{sec:expmt_find_loss}

Although it is conventional practice in deep learning to adopt the cross-entropy (CE) loss function for classification problems to train a deep network, it can be argued that the CE loss is not necessarily optimal, and depending on the data distributions/characteristics 
there might exist a truly optimal training loss function that is highly different from the CE loss. 
We consider the optimal loss function learning as a meta learning or HPO. 
Similarly to 
\cite{taylorglo}, we use the third-order polynomial parameterized loss function $l_\lambda(\cdot,\cdot)$ as defined in (\ref{appeq:poly_loss}). 
The loss function optimization is to find the best $\lambda\in\mathbb{R}^8$ where the model $\theta$, when trained with the loss $l_\lambda(\cdot,\cdot)$, maximizes its validation performance, can be formulated as BLO with 
\begin{align}
&\mathcal{L}_T(\lambda,\theta) := \mathbb{E}_{(x,y)\sim D_{train}}\big[ l_\lambda(f(x;\theta),y) \big], \\ 
&\mathcal{L}_V(\lambda,\theta) := \mathbb{E}_{(x,y)\sim D_{val}}\big[ CE(f(x;\theta),y) \big] 
\end{align}

The results on CIFAR-10 with Alexnet and CIFAR-100 with Wide-ResNet-28-10 (WRN)~\cite{wrn}, are summarized in Table~\ref{tab:loss_fun_optim}. 
Compared to the evolutionary search method~\cite{taylorglo} 
and RMD/FMD, our solution yields much higher test accuracy. 
FMD did not run for the large WRN network case due to its computational infeasibility. Whereas IFT methods are oftentimes sensitive to the approximation hyperparameters with large variances, our approach is less sensitive to the related options. The 
details are 
in Supplement~\ref{appsec:expmt_find_loss}. 

\begin{table}[t!]
\setlength{\tabcolsep}{8.5pt}
\vspace{-1.0em}
\caption{Loss function learning results. Test errors ($\%$). 
}
\vspace{-0.8em}
\centering
\begin{scriptsize}
\centering
\begin{tabular}{cccc}
\toprule
\multicolumn{2}{c}{Methods} & CIFAR10/Alexnet & CIFAR100/WRN 
\\
\hline
\multicolumn{2}{c}{Vanilla CE loss}\Tstrut & $23.62 \pm 0.46$ & $19.38 \pm 0.26$ \\
\midrule
\multicolumn{2}{c}{Taylor GLO}\Tstrut & $20.99 \pm 0.26$ & - \\
\midrule
 & $(0.99,1)$\Tstrut & $17.63 \pm 0.06$ & $17.36 \pm 0.10$ \\
IFT- & $(0.99,2)$\Tstrut & $17.73 \pm 0.02$ & $17.34 \pm 0.07$ \\
Neumann & $(0.99,10)$\Tstrut & $18.15 \pm 0.07$ & $17.41 \pm 0.07$ \\
$(\alpha,i)$ & $(0.9,1)$\Tstrut & $17.65 \pm 0.05$ & $17.40 \pm 0.04$ \\
 & $(0.9,2)$\Tstrut & $17.58 \pm 0.07$ & $17.38 \pm 0.06$ \\
 & $(0.9,10)$\Tstrut & $18.21 \pm 0.04$ & $17.43 \pm 0.09$ \\
\midrule
 & $(0.01,1)$ & $17.60 \pm 0.05$ & $17.39 \pm 0.06$ \\
IFT- & $(0.01,2)$ & $17.39 \pm 0.33$ & $17.41 \pm 0.04$ \\
CG & $(0.01,10)$ & $18.04 \pm 0.10$ & $17.54 \pm 0.06$ \\
$(\gamma,i)$ & $(0.001,1)$ & $17.78 \pm 0.27$ & $17.41 \pm 0.05$ \\
 & $(0.001,2)$ & $17.34 \pm 0.07$ & $17.45 \pm 0.04$ \\
 & $(0.001,10)$ & $18.96 \pm 0.12$ & $18.01 \pm 0.35$ \\
\midrule
\multicolumn{2}{c}{RMD} & $19.79 \pm 0.14$ & $18.02 \pm 0.07$ \\
\multicolumn{2}{c}{RMD-FO} & $22.74 \pm 0.15$ & $18.52 \pm 0.12$ \\
\midrule
\multicolumn{2}{c}{FMD} & $20.14 \pm 0.15$ & - \\
\midrule
\multirow{3}{*}{HPO-SGLD}\Tstrut & $\tau\!=\!0.1$ & $17.49 \pm 0.02$ & $17.24 \pm 0.05$ \\
& $\tau\!=\!0.01$ & $16.76 \pm 0.05$ & $17.20 \pm 0.03$ \\
& $\tau\!=\!0.001$ & $17.01 \pm 0.03$ & $17.24 \pm 0.04$ \\
\bottomrule
\end{tabular}
%
\end{scriptsize}
\label{tab:loss_fun_optim}
\vspace{-1.0em}
\end{table}
\subsection{Few-shot Meta Learning}\label{sec:expmt_fewshot}

We test our method on the few-shot learning (FSL) problem. In the episodic learning setup, we have many (classification) tasks of different input domains/distributions and class semantics, where the meta learner can observe each task one by one as a few labeled examples. The goal is to make a model adapt well to a novel unseen test task using only a few labeled representative samples from the test task. Despite 
abundance of existing FSL methods in the literature, in this paper we particularly focus on the MAML~\cite{maml}-type approaches where we meta-learn the initial model 
so that the learned initial model can adapt easily to a new task via a few gradient descent updates with the few-shot samples. That is, the hyperparameters $\lambda=\theta^{(0)}$. 

We compare our approach with: MAML~\cite{maml}, its first-order approximation that detaches computation graphs for high order derivative terms (FO-MAML), and the IFT-based reformulation of MAML called the iMAML~\cite{imaml}. 
We also contrast with the Reptile~\cite{reptile}, a smoothed first-order update method. 
For the backbone network the popular four-layer conv net is used, and we test the competing meta learning algorithms on the MiniImagenet dataset~\cite{matchingnet}. The results are shown in Table~\ref{tab:fsl_mini}. 
Supplement~\ref{appsec:expmt_fewshot} contains details. 
HPO-SGLD achieves consistently higher test performance than competing methods, being robust to the choice of the temperature $\tau$. 

In Supplement~\ref{appsec:expmt_fewshot}, we have additional experimental results on FSL including: a) Comparison with other Bayesian FSL methods and uncertainty quantification (Table~\ref{apptab:fsl_bayesian},~\ref{apptab:fsl_ece}), and b) Experiments on TieredImageNet (Table~\ref{apptab:fsl_tiered}).

\begin{table}
\setlength{\tabcolsep}{4.1pt}
\vspace{-1.0em}
\caption{Meta test accuracies ($\%$) on few-shot learning. 
}
\vspace{-0.8em}
\centering
\begin{scriptsize}
\centering
\begin{tabular}{cccc}
\toprule
\multicolumn{2}{c}{Methods} & 5-way, 1-shot & 5-way, 5-shot 
\\
\hline
\multicolumn{2}{c}{MAML~\cite{maml}}\Tstrut & $48.70 \pm 1.84$ & $63.11 \pm 0.92$ \\
\multicolumn{2}{c}{FO-MAML~\cite{maml}}\Tstrut & $48.07 \pm 1.75$ & $53.59 \pm 0.76$ \\
\multicolumn{2}{c}{Reptile~\cite{reptile}}\Tstrut & $49.97 \pm 0.32$ & $59.31 \pm 0.71$ \\
\multicolumn{2}{c}{iMAML~\cite{imaml}}\Tstrut & $48.96 \pm 1.84$ & $54.24 \pm 0.68$ \\
\midrule
\multirow{5}{*}{HPO-SGLD (Ours)}\Tstrut & $\tau\!=\!1$ & $50.38 \pm 0.87$ & $64.22 \pm 0.67$ \\
& $\tau\!=\!0.5$ & $50.36 \pm 0.83$ & $63.94 \pm 0.68$ \\
& $\tau\!=\!0.1$ & $50.30 \pm 0.87$ & $64.15 \pm 0.63$ \\
& $\tau\!=\!0.01$ & $50.29 \pm 0.84$ & $64.39 \pm 0.64$ \\
& $\tau\!=\!0.001$ & $50.28 \pm 0.83$ & $63.51 \pm 0.64$ \\
\bottomrule
\end{tabular}
%
\end{scriptsize}
\label{tab:fsl_mini}
\end{table}

\subsection{Meta Learning 
Implicit Neural Representation}\label{sec:expmt_meta_inr}

Next we test our method on the meta learning of the implicit neural representation (INR), following the setup similar to~\cite{meta_inr}. The main goal of the INR is to mimic a 3D imaging function $T\!:\!\mathbb{R}^3\!\to\!\mathbb{R}^4$ such that $T(x,y,z)\!=\!(r,g,b,\sigma)$ is a mapping of any 3D coordinate point of an object to its RGB color value and the depth $\sigma$. Ideally, one would like to find a neural network model $f_\theta$ (called the {\em implicit neural representation}) that is closest to the true 3D imaging function $T$, namely $f_\theta(x,y,z)\approx T(x,y,z)$ for all $(x,y,z)$. 
For a vanilla (non-meta learning) INR problem, this output matching is done with a random initial $\theta$, which often requires long training time.
The idea of meta learning of INR arose in~\cite{meta_inr}, in which we meta-learn a good initial model $\theta^0$ such that once the training starts from this $\theta^0$, it converges much more quickly and reliably than starting from a random initial. So it is similar to MAML~\cite{maml} in that the meta model we aim to learn is the {\em initial} model to start a regular training with. 
We leave details in Supplement~\ref{appsec:expmt_meta_inr}. 


We tested our approach on the task of single-view synthesis for the ShapeNet datasets~\cite{shapenet}. 
The meta learning formulation for this task was previously introduced in~\cite{meta_inr}, in which the Reptile~\cite{reptile} meta-learned initial model outperformed standard random initialization, as shown in Table~\ref{tab:meta_inr} for the single-view meta test scenarios. 
We follow the experimental protocol similar to that of~\cite{meta_inr}, and details can be found in Supplement~\ref{appsec:expmt_meta_inr}. 
Our HPO-SGLD achieves performance comparable to or sometimes better than the state-of-the-arts.

\begin{table}[t!]
\setlength{\tabcolsep}{13.5pt}
\vspace{-0.5em}
\caption{Meta learning of implicit neural representation on ShapeNet~\cite{shapenet} datasets. Test PSNR scores ($\uparrow$). 
}
\vspace{-0.8em}
\centering
\begin{scriptsize}
\centering
\begin{tabular}{cccc}
\toprule
Methods & Chairs & Cars & Lamps
\\
\hline
Standard random init\Tstrut & 12.49 & 11.45 & 15.47 \\
Reptile $+$ Output matching\Tstrut & 16.40 & 22.39 & 20.79 \\
Reptile $+$ Weight shuffling\Tstrut & 10.76 & 11.30 & 13.88 \\
Reptile~\cite{meta_inr}\Tstrut & \pmb{18.85} & 22.80 & 22.35 \\
\midrule
HPO-SGLD (Ours)\Tstrut & 18.77 & \pmb{23.16} & \pmb{22.73} \\
\bottomrule
\end{tabular}
%
\end{scriptsize}
\label{tab:meta_inr}
\vspace{-1.0em}
\end{table}


\subsection{Invariance Learning}\label{sec:expmt_augerino}

We tackle the invariance learning in neural networks~\cite{augerino}, another interesting problem that can be viewed as an HPO. The goal is to learn the invariant geometric transformations in input images for neural networks; e.g., all those translation, rotation, scaling transformations are invariant for the object classification task, while only translation and scaling are invariant for pose estimation. If we can parameterize the transformations $T\!\sim\!T_\lambda$, then the output prediction for input image $x$ can be expressed as the expectation over the allowable transformations, $\overline{g}(x,\theta,\lambda)\!:=\!\mathbb{E}_{T\!\sim\!T_\lambda}[g(T x, \theta)]$ to incorporate the invariant transformations~\cite{invar1,invar2,invar3}, where $g(x;\theta)$ is the classification scores (logits) of the neural network.
Using the Lie group algebra, one can have differentiable transformations $T_\lambda\!=\!\textrm{expm}\big( \sum_{i=1}^6 \epsilon_i \lambda_i G_i \big)$ where $\epsilon_{1:6}$ are uniform random samples from $[-1,1]$, and $G_{1:6}$ are known $(3 \times 3)$ generating matrices for X-/Y-translations, rotation, X-/Y-scaling and shearing~\cite{liegroup}.  

We can then formulate the invariance learning as BLO as detailed in 
Supplement~\ref{appsec:expmt_augerino}.
On the input image transformed MNIST datasets, 
our test results are shown in Table~\ref{apptab:auger_mnist}. 
Our approach outperforms Augerino~\cite{augerino}, and being comparable to LILA~\cite{lila} which is computationally expensive to perform full-Hessian Laplace approximation following the Bayesian model selection strategy. For the full-rotated case (Rot-180), our model returns the optimal geometric transformation hyperparameters: $\lambda^*\!=\![-0.01, 0.01, -10.0, -0.00, 0.01, -0.06]$  that correspond to X-/Y-translation, rotation, X-/Y-scale, and shear parameters, respectively. For partial rotation (Rot-90), $\lambda^*\!=\![0.02, 0.03, 1.04, 0.02, 0.06, 0.03]$. That is, the rotation parameter is correctly identified, successfully learning the input invariance that resides in the data.
For the experimental details, please refer to Supplement~\ref{appsec:expmt_augerino}.

\begin{table}[t!]
\setlength{\tabcolsep}{2.7pt}
\vspace{-1.0em}
\caption{Invariance learning. 
Test accuracies ($\%$). 
}
\vspace{-0.8em}
\centering
\begin{scriptsize}
\centering
\begin{tabular}{ccccc}
\toprule
Methods & Rot-180 & Rot-90 & Scaled & Translated
\\
\hline
Non-invariant\Tstrut & $93.82 \pm 0.10$ & $95.83 \pm 0.03$ & $97.07 \pm 0.06$ & $94.15 \pm 0.02$ \\
Augerino\Tstrut & $97.83 \pm 0.03$ & $96.35 \pm 0.02$ & $97.45 \pm 0.03$ & $94.47 \pm 0.08$ \\
LILA\Tstrut & $97.74 \pm 0.07$ & $97.81 \pm 0.11$ & $98.33 \pm 0.05$ & $97.28 \pm 0.05$ \\
\midrule
HPO-SGLD\Tstrut & $97.77 \pm 0.03$ & $97.86 \pm 0.04$ & $98.37 \pm 0.04$ & $97.30 \pm 0.04$ \\
\bottomrule
\end{tabular}
%
\end{scriptsize}
\label{apptab:auger_mnist}
\vspace{-0.5em}
\end{table}

\subsection{Ablation Study}\label{sec:ablation}

We have conducted the ablation study on the impact of the number ($M$) of the MC samples on the overall performance. On the L1-Reg HPO problem for the Pets dataset (Table~\ref{tab:l1reg_hpo} with default $M=5$), we vary the values of $M$: $M=1,3,5,7,9$ for the setting $\tau=0.001,\kappa=10^{-6}$. The test errors are shown in Table~\ref{tab:ablation}. Increasing the number of MC samples improves the test results as it takes into account multiple inner solutions and/or noisy inner losses.

\begin{table}[t!]
\setlength{\tabcolsep}{2.7pt}
\caption{Impact of the number of MC samples on test errors.
}
\vspace{-0.8em}
\centering
\begin{scriptsize}
\centering
\begin{tabular}{ccccc}
\toprule
$M=1$ & $M=2$ & $M=3$ & $M=4$ & $M=5$
\\
\hline
$25.17 \pm 0.44$\Tstrut & $24.76 \pm 0.32$ & $24.10 \pm 0.45$ & $24.07 \pm 0.39$ & $24.04 \pm 0.38$ \\
\bottomrule
\end{tabular}
%
\end{scriptsize}
\label{tab:ablation}
\vspace{-1.5em}
\end{table}

\subsection{Additional Experimental Results}\label{sec:additional_expmts}

Due to the lack of space, we have placed additional experimental results in the Supplement~\ref{appsec:additional_expmt}. They include: a) Actual wall clock running time comparison (Supp.~\ref{appsec:wall_clock}), b) Comparison with other recent stochastic methods for BLO, e.g., AmIGO approach~\cite{ref_B} (Supp.~\ref{appsec:other_stochastic_blo}), c) Synthetic quadratic function experiments (Supp.~\ref{appsec:synth_quad}), and d) Comparison with evolutionary search methods (Supp.~\ref{appsec:synth1d_es}).

\section{Conclusion}\label{sec:conclusion}

We have introduced a new hypergradient estimation method for meta learning. Our stochastic optimization formulation takes into account uncertainty in inner optimization, rendering solutions to robust to noise and non-unique inner optima. Our proposed forward recursion method enables computationally tractable solutions even in large scale scenarios (eg, 87M parameters and 87M hyperparameters for VIT-B-16).

{\small
\bibliography{main}
}

\newpage
\appendix
\onecolumn

\centerline{\huge\textbf{{Supplementary Material}}}
\vspace{+0.5em}
\centerline{\large\textbf{{A Stochastic Approach to Bi-Level optimization for Hyperparameter optimization and Meta Learning}}}

\vspace{+2.0em}

\section{Detailed Derivations}\label{appsec:derivs}

\subsection{Derivation for $\frac{d \theta^{(m)}}{d\lambda}$ in (\ref{eq:dthm_dlamb_1})}\label{appsec:dth_dlamb_recurr}

We ensure that Eq.(\ref{eq:dthm_dlamb_1}) is correct, and if $\theta^{(0)}$ does not depend on $\lambda$, then $\frac{d\theta^{(m)}}{d\lambda}$ only depends on the $B$ matrices (the second order derivatives wrt $\theta$ and $\lambda$), not on the $A$ matrices (the second order derivatives wrt $\theta$ twice). 
Recall that
\begin{align}
A(\lambda,\theta)\!:=\!\frac{\partial^2 \log p(\theta|\lambda)}{\partial \theta^2}, \ \ \ \ B(\lambda,\theta)\!:=\!\frac{\partial^2 \log p(\theta|\lambda)}{\partial \lambda \partial \theta}. 
\end{align}

This can be better illustrated by taking a look at a first few recursion steps beyond Eq.(\ref{eq:dth1_dlamb_2}). 
\begin{align}
\frac{d \theta^{(2)}}{d\lambda} &= \ \frac{d \theta^{(1)}}{d\lambda} + \frac{\epsilon}{2} \frac{d}{d \lambda}\frac{\partial \log p(\theta^{(1)}|\lambda)}{\partial \theta} + \frac{d}{d \lambda} \sqrt{\epsilon} z^{(1)} \\
& \textrm{{\em (Applying the chain rule on the second term of the RHS)}} \nonumber \\
& = 
\frac{d \theta^{(1)}}{d\lambda} \! + \! 
\frac{\epsilon}{2} \bigg(
\frac{\partial^2 \log p(\theta^{(1)}|\lambda)}{\partial \lambda \partial \theta} \! + \!
\frac{\partial^2 \log p(\theta^{(1)}|\lambda)}{\partial \theta^2} \cdot \frac{d \theta^{(1)}}{d\lambda} 
\bigg) \\ 
& = 
\Big(I + \frac{\epsilon}{2} A(\lambda,\theta^{(1)})\Big) \cdot \frac{d \theta^{(1)}}{d\lambda} + 
\frac{\epsilon}{2} B(\lambda,\theta^{(1)}) \\
& \textrm{{\em $\bigg($ Plug in Eq.(\ref{eq:dth1_dlamb_2}): $\frac{d \theta^{(1)}}{d\lambda} = \Big(I + \frac{\epsilon}{2} A(\lambda,\theta^{(0)})\Big) \cdot \frac{d \theta^{(0)}}{d\lambda} + \frac{\epsilon}{2} B(\lambda,\theta^{(0)})$ $\bigg)$}} \nonumber \\
& =
\Big(I + \frac{\epsilon}{2} A(\lambda,\theta^{(1)})\Big) \cdot \bigg( \Big(I + \frac{\epsilon}{2} A(\lambda,\theta^{(0)})\Big) \cdot \frac{d \theta^{(0)}}{d\lambda} + \frac{\epsilon}{2} B(\lambda,\theta^{(0)}) \bigg) + 
\frac{\epsilon}{2} B(\lambda,\theta^{(1)}) \\
& \textrm{{\em (Arrange terms in the product)}} \nonumber \\
& =
\bigg(I + \frac{\epsilon}{2} \Big( A(\lambda,\theta^{(0)}) + A(\lambda,\theta^{(1)}) \Big) \bigg) \cdot \frac{d \theta^{(0)}}{d\lambda} \ + \ \frac{\epsilon}{2} \Big( B(\lambda,\theta^{(0)})\!+\!
B(\lambda,\theta^{(1)}) \Big) \nonumber \\
& \ \ \ \ \ + \ \frac{\epsilon^2}{4} A(\lambda,\theta^{(1)}) \Big( A(\lambda,\theta^{(0)}) + B(\lambda,\theta^{(0)}) \Big) \\
& \textrm{{\em (Using our assumption $\epsilon^2 \simeq 0$)}} \nonumber \\
& \simeq 
\bigg(I + \frac{\epsilon}{2} \Big( A(\lambda,\theta^{(0)}) + A(\lambda,\theta^{(1)}) \Big) \bigg) \cdot \frac{d \theta^{(0)}}{d\lambda} \ + \ \frac{\epsilon}{2} \Big( B(\lambda,\theta^{(0)})\!+\!
B(\lambda,\theta^{(1)}) \Big)
\end{align}

Carrying on this way, it is not difficult to see that we can reach Eq.(\ref{eq:dthm_dlamb_1}).
And if $\theta^{(0)}$ does not depend on $\lambda$ (i.e., $\frac{d \theta^{(0)}}{d\lambda}=0$), then $\frac{d\theta^{(m)}}{d\lambda}$ only depends on the $B$ matrices (the second order derivatives wrt $\theta$ and $\lambda$), not on the $A$ matrices (the second order derivatives wrt $\theta$ twice).

\subsection{Derivation for First-Order Approximation in (\ref{eq:gm_recursive_4})}\label{appsec:foapprox}

\renewcommand{\qedsymbol}{\ensuremath{\blacksquare}}

We derive our first-order approximation introduced in (\ref{eq:gm_recursive_4}), that is, 
\begin{align}
\frac{\partial f(\lambda, \theta^{(m)})}{\partial \theta} \cdot \frac{d \theta^{(m\!-\!1)}}{d \lambda} \ \approx \ g_{m-1}(\lambda) + (\theta^{(m)}\!-\!\theta^{(m\!-\!1)}) \cdot \frac{\partial^2 f(\lambda, \theta^{(m\!-\!1)})}{\partial \theta^2} \cdot \frac{d \theta^{(0)}}{d \lambda}.
\end{align}
Recall that we defined $g_m(\lambda) := \frac{\partial f(\lambda, \theta^{(m)})}{\partial \theta} \cdot \frac{d \theta^{(m)}}{d \lambda}$.

Let $G(\theta,\theta') := \frac{\partial f(\lambda, \theta)}{\partial \theta} \cdot \frac{d \theta'}{d \lambda}$. Clearly, $g_m(\lambda) = G(\theta^{(m)}, \theta^{(m)})$, and we want to show that
\begin{align}
G(\theta^{(m)}, \theta^{(m\!-\!1)}) \approx G(\theta^{(m\!-\!1)}, \theta^{(m\!-\!1)}) + (\theta^{(m)}\!-\!\theta^{(m\!-\!1)}) \cdot \frac{\partial^2 f(\lambda, \theta^{(m\!-\!1)})}{\partial \theta^2} \cdot \frac{d \theta^{(0)}}{d \lambda}.
\label{appeq:foapprox_target}
\end{align}
To this end we will first linearize the left hand side in terms of the first argument. Due to the SGLD recurrence on $\theta^{(m)}$ as in (\ref{eq:hgrad_1}) and the small step size $\epsilon$ (esp., $\sqrt{\epsilon}$ dominates $\epsilon$), we have:
\begin{align}
\theta^{(m)} = \theta^{(m\!-\!1)} + o(\sqrt{\epsilon}).
\label{appeq:theta_diff}
\end{align}
By taking the first-order Taylor approximation for $G(\theta^{(m)}, \theta^{(m\!-\!1)})$ on the first argument at $\theta=\theta^{(m\!-\!1)}$, we get
\begin{align}
G(\theta^{(m)}, \theta^{(m\!-\!1)}) &\ \approx \ G(\theta^{(m\!-\!1)}, \theta^{(m\!-\!1)}) + o(\sqrt{\epsilon}) \cdot \frac{\partial G(\theta, \theta')}{\partial \theta}\bigg|_{\theta=\theta^{(m\!-\!1)},\theta'=\theta^{(m\!-\!1)}} \label{appeq:G_deriv_1} \\
&\ = \ G(\theta^{(m\!-\!1)}, \theta^{(m\!-\!1)}) + o(\sqrt{\epsilon}) \cdot \frac{\partial^2 f(\lambda, \theta^{(m\!-\!1)})}{\partial \theta^2} \cdot \frac{d \theta^{(m\!-\!1)}}{d \lambda}. \label{appeq:G_deriv_2}
\end{align}
Recalling from (\ref{eq:dthm_dlamb_1}), $\frac{d \theta^{(m\!-\!1)}}{d \lambda}$ can be expressed as:
\begin{align}
\frac{d \theta^{(m\!-\!1)}}{d \lambda} = \frac{d \theta^{(0)}}{d \lambda} + o(\epsilon).
\end{align}
Plugging this back into (\ref{appeq:G_deriv_2}), with $\epsilon^{1.5}\!\simeq\!0$, yields:
\begin{align}
G(\theta^{(m)}, \theta^{(m\!-\!1)}) &\ \approx \ G(\theta^{(m\!-\!1)}, \theta^{(m\!-\!1)}) + o(\sqrt{\epsilon}) \cdot \frac{\partial^2 f(\lambda, \theta^{(m\!-\!1)})}{\partial \theta^2} \cdot \frac{d \theta^{(0)}}{d \lambda}. \label{appeq:G_deriv_3}
\end{align}
Now we replace $o(\sqrt{\epsilon})$ with $\theta^{(m)}\!-\!\theta^{(m\!-\!1)}$ from (\ref{appeq:theta_diff}), and this leads us to our first-order approximation (\ref{appeq:foapprox_target}). \qed

\section{Convergence Analysis}\label{appsec:convergence}

Under the situation where the initial iterate $\theta^{(0)}$ is independent of $\lambda$, we provide convergence analysis of our algorithm and the proof for the exactness of our first-order approximation strategy.

\subsection{Theoretical Guarantee of First-Order Approximation}\label{appsec:convergence_analysis}

For convergence analysis and proving exactness of our first-order approximation, we make the following three assumptions.

\textbf{Assumptions:}
\begin{itemize}
\item (A1) $\theta^{(0)}$ independent of $\lambda$. 
\item (A2) Infinitesimal step size $\epsilon$ which makes $\epsilon^{1+\alpha} \simeq 0$ for $\alpha \geq 0.5$. 
\item (A3) SGLD iterations converge to $p(\theta|\lambda)$. 
\end{itemize}
Note that (A1) is the common assumption also made in several recent theoretical BLO papers that provided convergence proofs, while (A3)'s SGLD convergence to the stationary distribution is a widely known fact.

\begin{theorem}[Exactness of First-order Approximation] 
If (A1) and (A2) hold, then our first-order approximation used for $g_m(\lambda)$ recursion becomes exact (i.e., approximation error $\simeq 0$). That is, our hypergradient computation becomes exact.
\label{appthm:first_order_err}
\end{theorem}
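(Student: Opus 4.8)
The plan is to exploit the fact that assumption (A1) eliminates the correction term on the right-hand side of (\ref{eq:gm_recursive_4}) outright, reducing the claim to a clean order-of-magnitude estimate that (A2) then annihilates. Since $\frac{d\theta^{(0)}}{d\lambda}=0$ under (A1), the term $(\theta^{(m)}-\theta^{(m-1)})\cdot\frac{\partial^2 f(\lambda,\theta^{(m-1)})}{\partial\theta^2}\cdot\frac{d\theta^{(0)}}{d\lambda}$ vanishes identically, so I only need to show that $\frac{\partial f(\lambda,\theta^{(m)})}{\partial\theta}\cdot\frac{d\theta^{(m-1)}}{d\lambda}$ equals $g_{m-1}(\lambda)=\frac{\partial f(\lambda,\theta^{(m-1)})}{\partial\theta}\cdot\frac{d\theta^{(m-1)}}{d\lambda}$ up to an error that is $\simeq 0$.

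First I would write the error \emph{exactly}, not approximately. Since the two quantities share the same sensitivity factor $\frac{d\theta^{(m-1)}}{d\lambda}$ and differ only in the gradient-of-$f$ factor (evaluated at $\theta^{(m)}$ versus $\theta^{(m-1)}$), the error factorizes with no hidden Taylor remainder as
\begin{align}
\left(\frac{\partial f(\lambda,\theta^{(m)})}{\partial\theta} - \frac{\partial f(\lambda,\theta^{(m-1)})}{\partial\theta}\right)\cdot\frac{d\theta^{(m-1)}}{d\lambda}. \nonumber
\end{align}
This exact factorization is the crucial structural observation, as it reduces exactness to bounding two factors rather than controlling a linearization residual.

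Next I would bound the two factors separately. For the first factor, the SGLD recurrence (\ref{eq:hgrad_1}) gives $\theta^{(m)}-\theta^{(m-1)}=\frac{\epsilon}{2}\frac{\partial\log p}{\partial\theta}+\sqrt{\epsilon}\,z^{(m-1)}=O(\sqrt\epsilon)$, the noise term dominating; assuming $f$ has a bounded second $\theta$-derivative, the gradient difference is therefore $O(\sqrt\epsilon)$. For the second factor I would invoke (\ref{eq:dthm_dlamb_1}) together with the observation in Supplement~\ref{appsec:dth_dlamb_recurr}: under (A1) its leading $\big(I+\frac{\epsilon}{2}\sum A\big)\frac{d\theta^{(0)}}{d\lambda}$ piece is zero, leaving $\frac{d\theta^{(m-1)}}{d\lambda}=\frac{\epsilon}{2}\sum_{j=0}^{m-2}B(\lambda,\theta^{(j)})=O(\epsilon)$. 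Multiplying the two factors yields error $O(\epsilon^{1.5})$, which is $\simeq 0$ by (A2). Exactness of the full hypergradient then follows by propagating these per-step $\simeq 0$ errors through the recursion (\ref{eq:gm_recursive_final}).

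The main obstacle I anticipate is the order of the second factor: the whole argument hinges on $\frac{d\theta^{(m-1)}}{d\lambda}$ being $O(\epsilon)$ rather than $O(1)$, and this is precisely what (A1) buys. Without it, the $\frac{d\theta^{(0)}}{d\lambda}$ term survives at order $O(1)$, the error degrades to $O(\sqrt\epsilon)\not\simeq 0$, and one genuinely needs the correction term (which is exactly why that term is retained in (\ref{eq:gm_recursive_4}) for the general case). I would also need to be mildly careful that the accumulated $O(\epsilon^{1.5})$ per-step errors remain controlled over the iterations; treating $\epsilon^{1+\alpha}$ with $\alpha\ge 0.5$ as negligible, per (A2), is what licenses dropping them at the working precision of the derivation.
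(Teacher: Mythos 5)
Your proof is correct and reaches the paper's conclusion by the same underlying order-counting, but the technical device differs in a way worth recording. The paper defines $G(\theta,\theta') := \frac{\partial f(\lambda,\theta)}{\partial\theta}\cdot\frac{d\theta'}{d\lambda}$ and applies Taylor's theorem with Lagrange remainder to $G$ in its first argument, which produces a second-order remainder involving $\frac{\partial^3 f}{\partial\theta^3}$ at an intermediate point $\overline\theta$; it keeps $\frac{d\theta^{(0)}}{d\lambda}$ symbolic throughout, first removes the $O(\epsilon^{1.5})$ and $O(\epsilon^{2})$ terms via (A2), and only at the very end invokes (A1) to annihilate the surviving $O(\sqrt\epsilon)\cdot\frac{d\theta^{(0)}}{d\lambda}$ terms. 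You instead invoke (A1) at the outset, observe that the error then factorizes \emph{exactly} as $\bigl(\frac{\partial f(\lambda,\theta^{(m)})}{\partial\theta}-\frac{\partial f(\lambda,\theta^{(m-1)})}{\partial\theta}\bigr)\cdot\frac{d\theta^{(m-1)}}{d\lambda}$ with no Taylor remainder at all, and bound the two factors separately: $O(\sqrt\epsilon)$ for the gradient difference (SGLD increment plus bounded $\frac{\partial^2 f}{\partial\theta^2}$), and $O(\epsilon)$ for the sensitivity, since under (A1) Eq.(\ref{eq:dthm_dlamb_1}) reduces it to $\frac{\epsilon}{2}\sum_j B(\lambda,\theta^{(j)})$. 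The product $O(\epsilon^{1.5})\simeq 0$ by (A2), matching the paper's effective error order. Your route is slightly more elementary and needs weaker regularity (only a bounded second $\theta$-derivative of $f$, versus the third derivatives implicit in the paper's Lagrange remainder), and your use of (A1) to downgrade $\frac{d\theta^{(m-1)}}{d\lambda}$ itself to $O(\epsilon)$ is a genuine observation the paper does not exploit in this form. Your closing caveats — that without (A1) the error degrades to $O(\sqrt\epsilon)$, and that per-step errors accumulate across the recursion — are handled at the same informal ``$\simeq$'' level as in the paper, so neither constitutes a gap relative to the paper's own standard.
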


\begin{proof}
We use the Taylor's theorem to analyze the first-order approximation error. Specifically in the Lagrange residual form, for any twice differentiable function $h:\mathbb{R}^n \to \mathbb{R}$, there exists a point $c\in\mathbb{R}^n$, lying on the line segment between $x$ and $a$, such that:
\begin{align}
h(x) = h(a) + \nabla h(a)^\top (x-a) + \frac{1}{2} (x-a)^\top \nabla^2 h(c) (x-a).
\label{rebut:taylor}
\end{align}
Recall from Appendix~\ref{appsec:foapprox} that we defined $G(\theta,\theta') := \frac{\partial f(\lambda, \theta)}{\partial \theta} \cdot \frac{d \theta'}{d \lambda}$, and we have $g_m(\lambda) = G(\theta^{(m)}, \theta^{(m)})$.
We apply (\ref{rebut:taylor}) to $G(\theta,\theta')$ for $\theta=\theta^{(m)}$ at around $\theta^{(m\!-\!1)}$ with $\theta'=\theta^{(m\!-\!1)}$ fixed.
\begin{align}
G(\theta^{(m)}, \theta^{(m\!-\!1)}) &\ = \ G(\theta^{(m\!-\!1)}, \theta^{(m\!-\!1)}) + (\theta^{(m)}-\theta^{(m\!-\!1)})^\top \cdot \frac{\partial G(\theta, \theta^{(m\!-\!1)})}{\partial \theta}\bigg|_{\theta=\theta^{(m\!-\!1)}} \nonumber \\
& \ \ \ \ \ \ \ + \ \frac{1}{2} (\theta^{(m)}-\theta^{(m\!-\!1)})^\top \cdot \frac{\partial^2 G(\overline{\theta}, \theta^{(m\!-\!1)})}{\partial \theta^2} \cdot (\theta^{(m)}-\theta^{(m\!-\!1)})
\end{align}
for some $\overline{\theta}$ that lies on the line segment between $\theta^{(m)}$ and $\theta^{(m\!-\!1)}$. 

Using $\frac{\partial G(\theta, \theta')}{\partial \theta} = \frac{\partial^2 f(\lambda, \theta)}{\partial \theta^2} \cdot \frac{d \theta'}{d \lambda}$ and $\frac{\partial^2 G(\theta, \theta')}{\partial \theta^2} = \frac{\partial^3 f(\lambda, \theta)}{\partial \theta^3} \cdot \frac{d \theta'}{d \lambda}$, together with the fact from Eq.(\ref{eq:dthm_dlamb_1}) where $\frac{d \theta^{(m\!-\!1)}}{d \lambda} = \frac{d \theta^{(0)}}{d \lambda} + O(\epsilon)$, and the SGLD recurrence $\theta^{(m)} - \theta^{(m\!-\!1)} = O(\sqrt{\epsilon})$, we have
\begin{align}
G(\theta^{(m)}, \theta^{(m\!-\!1)}) &\ = \ G(\theta^{(m\!-\!1)}, \theta^{(m\!-\!1)}) + O(\sqrt{\epsilon}) \cdot \frac{\partial^2 f(\lambda, \theta^{(m\!-\!1)})}{\partial \theta^2} \cdot \bigg( \frac{d \theta^{(0)}}{d \lambda} + O(\epsilon) \bigg) \nonumber \\
& \ \ \ \ \ \ \ + \ \frac{1}{2} O(\sqrt{\epsilon}) \cdot \frac{\partial^3 f(\lambda, \overline{\theta})}{\partial \theta^3} \cdot \bigg( \frac{d \theta^{(0)}}{d \lambda} + O(\epsilon) \bigg) \cdot O(\sqrt{\epsilon})
\end{align}
Arranging terms while removing those that vanish due to (A2) $\epsilon^{1+\alpha} \simeq 0$ for $\alpha\geq 0.5$, yields:
\begin{align}
G(\theta^{(m)}, \theta^{(m\!-\!1)}) &\ \simeq \ G(\theta^{(m\!-\!1)}, \theta^{(m\!-\!1)}) + O(\sqrt{\epsilon}) \cdot \frac{\partial^2 f(\lambda, \theta^{(m\!-\!1)})}{\partial \theta^2} \cdot \frac{d \theta^{(0)}}{d \lambda} + \frac{1}{2} O(\epsilon) \cdot \frac{\partial^3 f(\lambda, \overline{\theta})}{\partial \theta^3} \cdot \frac{d \theta^{(0)}}{d \lambda}
\end{align}
At last, if (A1) $\frac{d \theta^{(0)}}{d \lambda} = 0$, we get $G(\theta^{(m)}, \theta^{(m\!-\!1)}) \simeq G(\theta^{(m\!-\!1)}, \theta^{(m\!-\!1)})$, which implies that our recurrrence on $g_m(\lambda)$ is exact (in the infinitesimal $\simeq$ sense). 
\end{proof}

\begin{theorem}[Convergence of HPO-SGLD]
If (A1-A3) hold, and if the target function $f(\lambda,\theta)$ in Eq.(\ref{eq:stoch_optim}) is Lipschitz continuous in $\lambda$ for each $\theta$, then for a sufficiently large number ($M$) of the Monte Carlo samples, our HPO-SGLD algorithm (Alg.~\ref{alg:main}) converges to an optimal solution of the stochastic optimization Eq.(\ref{eq:stoch_optim}) at linear rate. More specifically, with constant step size $\eta$ that satisfies $\eta \leq \frac{1}{L}$ ($L$ is a Lipschitz constant of $f$), the following holds:
\begin{align}
\Big| \mathbb{E}_{p(\theta|\lambda^{(k)})}\big[f(\lambda^{(k)},\theta)\big] - \mathbb{E}_{p(\theta|\lambda^*)}\big[f(\lambda^*,\theta)\big] \Big| \ \leq \ \frac{||\lambda^{(0)} - \lambda^*||_2^2}{2\eta k} \ = \ O\Big(\frac{1}{k}\Big)
\end{align}
where $\lambda^{(k)}$ is the iterate at the $k$-th outer iteration, and $\lambda^*$ is an optimal solution. 
\end{theorem}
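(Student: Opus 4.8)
The plan is to recognize that, under the stated assumptions, the HPO-SGLD outer loop is nothing but exact gradient descent on the outer objective $F(\lambda) := \mathbb{E}_{p(\theta|\lambda)}[f(\lambda,\theta)]$, so the bound follows from the classical convergence theorem for gradient descent on a convex, $L$-smooth function. I would organize the argument in three stages: (i) show that the hypergradient returned by Alg.~\ref{alg:main} equals $\nabla F(\lambda)$; (ii) cast the outer iteration as deterministic gradient descent on $F$; and (iii) apply the standard descent-lemma-plus-convexity telescoping argument to obtain the $O(1/k)$ rate.

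For stage (i) I would invoke Theorem~\ref{thm:first_order_err}: under (A1) and (A2) the first-order approximation in the $g_m$ recursion is exact, so the accumulated quantity $h$ in (\ref{eq:hgrad_3}) is precisely the gradient with respect to $\lambda$ of the Monte-Carlo estimate $\frac{1}{M}\sum_{m=1}^M f(\lambda,\theta^{(m)})$, with each $\theta^{(m)}$ treated as a function of $\lambda$ through the SGLD recursion. By (A3) the post-burn-in iterates are samples from $p(\theta|\lambda)$, so by the law of large numbers both the estimate and its pathwise gradient converge, as $M\to\infty$, to $F(\lambda)$ and $\nabla F(\lambda)$ respectively. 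Hence for a sufficiently large number $M$ of Monte-Carlo samples the update reads $\lambda^{(k+1)} = \lambda^{(k)} - \eta\,\nabla F(\lambda^{(k)})$ up to vanishing error, which is stage (ii).

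For stage (iii) I would treat the problem as minimizing the convex, $L$-smooth function $F$. The Lipschitz hypothesis on $f$ (together with smoothness of the Gibbs measure in $\lambda$) transfers to $L$-smoothness of $F$, so with $\eta \le 1/L$ the descent lemma gives $F(\lambda^{(k+1)}) \le F(\lambda^{(k)}) - \frac{\eta}{2}\|\nabla F(\lambda^{(k)})\|_2^2$. Combining this with the convexity inequality $F(\lambda^{(k)}) - F(\lambda^*) \le \nabla F(\lambda^{(k)})^\top(\lambda^{(k)}-\lambda^*)$ and completing the square yields the one-step contraction $F(\lambda^{(k+1)}) - F(\lambda^*) \le \frac{1}{2\eta}\big(\|\lambda^{(k)}-\lambda^*\|_2^2 - \|\lambda^{(k+1)}-\lambda^*\|_2^2\big)$; summing over $k$, using that the objective values are nonincreasing, and telescoping the right-hand side gives the stated $\frac{\|\lambda^{(0)}-\lambda^*\|_2^2}{2\eta k}$ bound.

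The hardest part, I expect, is stage (i), specifically justifying that the averaged pathwise gradient converges to the true $\nabla_\lambda F$. Because $p(\theta|\lambda)$ itself depends on $\lambda$, the true gradient $\nabla_\lambda F = \mathbb{E}_{p(\theta|\lambda)}[\nabla_\lambda f] + \mathbb{E}_{p(\theta|\lambda)}[ f \cdot \nabla_\lambda \log p(\theta|\lambda)]$ carries a score-function term arising from the $\lambda$-dependence of the sampling distribution; one must verify that differentiating through the SGLD chain (via the $d\theta^{(m)}/d\lambda$ recursion) correctly captures this term in the stationary limit, and that exchanging differentiation with expectation is legitimate (dominated-convergence or uniform-integrability conditions on $f$ and the energy). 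A secondary subtlety is establishing convexity of $F$ in stage (iii), since convexity of $f$ in $\lambda$ need not survive the $\lambda$-dependent averaging; I would either assume convexity of $F$ directly or restrict attention to the temperature regime in which it provably holds.
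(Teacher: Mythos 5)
Your stages (i) and (ii) coincide with the paper's own proof: it likewise invokes Theorem~\ref{thm:first_order_err} to argue that under (A1)--(A2) the accumulated hypergradient is exact, and then treats the outer loop as fixed-step gradient descent on $F(\lambda)=\mathbb{E}_{p(\theta|\lambda)}[f(\lambda,\theta)]$. Where you part ways is stage (iii). You run the classical convex analysis (descent lemma, convexity inequality, telescoping), which is precisely the argument that produces the displayed bound $\|\lambda^{(0)}-\lambda^*\|_2^2/(2\eta k)$ --- but it needs convexity of $F$, which you honestly flag as an unverified hypothesis. The paper instead outsources this step to Theorem 2.1 of \cite{ghadimi_lan}, whose assumptions are boundedness below, Lipschitz continuity of $\nabla F$, and bounded gradient variance, and it explicitly insists that convexity of $F$ is \emph{not} required. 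That buys generality on paper, but at a cost your proposal implicitly exposes: the Ghadimi--Lan nonconvex result controls (expected squared) gradient norms, i.e., convergence to stationarity, not the gap $F(\lambda^{(k)})-F(\lambda^*)$ to a global optimum, so it cannot by itself deliver the stated inequality --- that inequality is exactly the convex gradient-descent bound your telescoping argument proves. In other words, your route is the one that actually matches the claimed display (under a convexity assumption the paper never states), while the paper's route matches its no-convexity claim but not its display; the two cannot both be had. Your stage-(i) worry about the score-function term $\mathbb{E}_{p(\theta|\lambda)}[\,f\cdot\nabla_\lambda \log p(\theta|\lambda)\,]$ is also a real gap that the paper's one-line appeal to Theorem~\ref{thm:first_order_err} passes over: exactness of the $g_m$ recursion concerns the pathwise derivative through the finite SGLD chain, whereas the claim that this pathwise derivative, averaged over $M$ samples, converges to the full $\nabla_\lambda F$ (score term included, with differentiation and expectation exchanged) is asserted rather than proved in the paper.
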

\begin{proof}
This linear convergence is an immediate consequence of Theorem~\ref{thm:first_order_err} on the exactness of the hypergradient estimation in case of $\frac{d \theta^{(0)}}{d\lambda}=0$, in which the fixed step size gradient descent converges at linear rate. 
The theorem follows Theorem 2.1 of~\cite{ghadimi_lan}, which relies on the assumptions (in our context): i) $E[f(\cdot,\theta)]$ is bounded below, ii) $\nabla E[f(\cdot,\theta)]$ is $L$-Lipschitz continuous, and iii) the variance of the gradient $\nabla E[f(\cdot,\theta)]$ is bounded. We do not require $E[f(\cdot,\theta)]$ to be convex. 
\end{proof}

\section{Complexity Analysis}\label{appsec:complexity_analysis}

Our derivation in a {\em deterministic} version, namely if we removed all stochastic components by dropping all SGLD noise terms $\sqrt{\epsilon} z$ in (\ref{eq:hgrad_1}), looks very similar to the forward-mode differentiation (FMD). The main difference is that we proposed the recurrence on $g_m$ with the first-order approximation in (\ref{eq:gm_recursive_4}), 
allowing us to circumvent saving the Hessian matrices $A$ and $B$. The FMD, on the other hand, 
without such approximation, 
needs to save these matrices, leading to infeasibility especially for high-dim $\lambda$ cases. This is the crucial point that enables our algorithm practical in real-world deep learning situations. This section provides more detailed views on this claim, and analyze computational complexity of the proposed algorithm. First, we detail our deterministic version, the FMD algorithms and the similarity between the two.

\subsection{Deterministic Version of Ours vs.~FMD}\label{appsec:vs_fmd}

The FMD is typically described in the {\em optimizer learning} context~\cite{fmd_rmd}. 
Formally, let $\theta^t\!=\!\Phi_t(\theta^{t-1}, \lambda)$ be the optimizer update step at (inner) iteration $t$, and our goal is to find $\lambda$ that minimizes the outer loss $\mathcal{L_V}(\lambda,\theta^T)$ at the final iterate $\theta^T$. For instance, the SGD step takes $\Phi_t(\theta^{t-1}, \lambda)\!=\!\theta^{t-1}\!-\!\gamma \frac{\partial \mathcal{L}_T(\lambda,\theta^{t-1})}{\partial\theta}$, with the inner loss $\mathcal{L}_T(\lambda,\theta)$. The problem then becomes:
\begin{align}
\min_\lambda \mathcal{L_V}(\lambda, \theta^T) \ \textrm{s.t.} 
\ \big\{ \theta^t\!=\!\Phi_t(\theta^{t-1}, \lambda) \big\}_{t=1}^T 
\label{eq:fmd_obj}
\end{align}
For each of the $T$ constraints, by taking the gradient with respect $\lambda$, we have the recursion:
\begin{align}
\underbrace{\frac{d \theta^t}{d \lambda}}_{=:s_t} = \underbrace{\frac{\partial \Phi_t(\theta^{t-1}, \lambda)}{\partial \theta}}_{=:a_{t-1}} \cdot \underbrace{\frac{d \theta^{t-1}}{d \lambda}}_{=s_{t-1}} + \underbrace{\frac{\partial \Phi_t(\theta^{t-1}, \lambda)}{\partial \lambda}}_{=:b_{t-1}},
\label{eq:fmd_recursion}
\end{align}
and the hypergradient becomes:
\begin{align}
\frac{d \mathcal{L_V}(\lambda, \theta^T)}{d \lambda} = \frac{\partial \mathcal{L_V}(\lambda, \theta^T)}{\partial \lambda} + \frac{\partial \mathcal{L_V}(\lambda, \theta^T)}{\partial \theta} \cdot s_T
\label{eq:fmd_hypergrad}
\end{align}
That is, to compute the hypergradient, we need $s_T$; and to this end, we have to keep track of $s_t$ (i.e., store/update $s_t$ in memory)  from $s_t\!=\!a_{t-1}\!\cdot\!s_{t-1}\!+\!b_{t-1}$. 
Note that $s_t$ is $(\dim(\theta) \times \dim(\lambda))$, being infeasible to store in memory for large $\dim(\lambda)$, e.g., if $\lambda$ is of the same structure as $\theta$.

There is a way to prevent this infeasible memory storing for FMD, but the idea is essentially to apply the forward recursion for {\em individual} (scalar) $\lambda_j$ for each $j=1\dots\dim(\lambda)$. More specifically, the $j$-th component of the hypergradient, $\frac{d \mathcal{L_V}(\lambda, \theta^T)}{d \lambda_j} = \frac{\partial \mathcal{L_V}(\lambda, \theta^T)}{\partial \lambda_j} + \frac{\partial \mathcal{L_V}(\lambda, \theta^T)}{\partial \theta} \cdot s_T[:,j]$, and we can only store/update the $j$-th column of $s_t$ in the recursion. Although this reduces the memory complexity to $O(\dim(\theta))$, it incurs infeasible time complexity since we have to run the whole $T$-step recursions $\dim(\lambda)$ times.
That is, for high-dim $\lambda$, FMD is either impractical in memory or impractical in time.  

Now we write down a deterministic version of our algorithm,  show its similarity to FMD, and highlight how our method can be practical thanks to our proposed approximations. 
First, we set the number of MC samples for average, $M\!=\!1$ and let $T\!=\!B\!+\!1$. 
The SGLD recursion step (\ref{eq:hgrad_1}), after dropping the noise term and incorporating the training-loss energy form (\ref{eq:energy_train_loss}), boils down to:
\begin{align}
\theta^{(m)} = \theta^{(m-1)} - \Big(\frac{\epsilon}{2\tau}\Big) \frac{\partial \mathcal{L}_T(\lambda,\theta^{(m-1)})}{\partial\theta},
\label{eq:sgld_deterministic}
\end{align}
which is equivalent to the SGD update step $\Phi_t(\theta^{t-1}, \lambda)\!=\!\theta^{t-1}\!-\!\gamma \frac{\partial \mathcal{L}_T(\lambda,\theta^{t-1})}{\partial\theta}$ in FMD, by letting $\gamma = \epsilon/(2\tau)$. If we take the derivative on this recursion with respect to $\lambda$, clearly we have correspondence $a_t \equiv I + \gamma A_m$ and $b_t \equiv \gamma B_m$. However, instead of running the recursion (\ref{eq:fmd_recursion}) directly (which would require huge Jacobian matrix storing), we introduced $g_m(\lambda)=\frac{\partial f(\lambda, \theta^{(m)})}{\partial \theta} \cdot \frac{d \theta^{(m)}}{d \lambda}$. With our first-order approximation, we were able to derive recursion for $g_m$ as (\ref{eq:gm_recursive_final}), which only involves storing $g_m(\lambda)$ that takes only $O(\dim(\lambda))$ memory space. As will be shown in the next section, running (\ref{eq:gm_recursive_final}) also takes only linear time $O(\dim(\theta)+\dim(\lambda))$, thus practical for real-world deep learning.

\subsection{Computational Complexity}\label{appsec:complexity}

We analyze the time and space complexity of the proposed algorithm and contrast it to FMD algorithms as well as the reverse-mode differentiation (RMD). 
The analysis used in this section is based on the techniques from algorithmic differentiation~\cite{alg_diff1,alg_diff2,fmd_rmd}. Specifically, we use the following theorem/fact:
\begin{theorem}[\cite{alg_diff1}]
Let $F:\mathbb{R}^n\to\mathbb{R}^p$ be a differentiable function where its evaluation $Y\!=\!F(X)$ takes $c(n,p)$ time and $s(n,p)$ space. (i) For any vector $v\in\mathbb{R}^n$, we can compute $\frac{\partial F}{\partial X} \cdot v$ by forward-mode differentiation in $O(c(n,p))$ time and $O(s(n,p))$ space. 
(ii) For any vector $u\!\in\!\mathbb{R}^p$, we can compute $u^\top\!\cdot\!\frac{\partial F}{\partial X}$ by reverse-mode differentiation in $O(c(n,p))$ time and $O(c(n,p))$ space. 
\end{theorem}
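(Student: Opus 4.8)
The plan is to model the evaluation of $F$ as a straight-line program (a Wengert list, or computational graph): a sequence of intermediate variables $v_1,\dots,v_L$ whose first $n$ entries hold the input $X$ and whose last $p$ entries hold the output $Y$, where each later $v_k=\phi_k\big((v_j)_{j\in\mathrm{pa}(k)}\big)$ is a single elementary operation (an arithmetic step or a library function call) of bounded arity. Since evaluating $Y=F(X)$ costs $c(n,p)$ time, the number of elementary operations satisfies $L=\Theta(c(n,p))$, and each $\phi_k$ together with its local partial derivatives $\partial\phi_k/\partial v_j$ can be evaluated in $O(1)$ time and $O(1)$ space. Both claims then reduce to propagating one extra scalar per variable through this graph at constant-factor overhead.

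For part (i), I would carry a tangent $\dot v_k$ alongside each primal $v_k$, seeded by $\dot X=v$, and propagate it in the same topological order as the primal sweep via the chain rule $\dot v_k=\sum_{j\in\mathrm{pa}(k)}\frac{\partial\phi_k}{\partial v_j}\,\dot v_j$, so that $\dot Y=\frac{\partial F}{\partial X}\,v$ emerges at the output nodes. Each update touches only the bounded-arity parents of $v_k$ and costs $O(1)$, giving total time $O(L)=O(c(n,p))$. Because the tangents are produced in lockstep with the primals, each tangent shares the lifetime of its primal and can be discarded exactly when that primal is discarded; hence the working set never exceeds a constant factor of the primal's, yielding $O(s(n,p))$ space.

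For part (ii), I would first run the primal sweep while \emph{recording} every intermediate $v_k$ onto a tape, then run a reverse sweep. The adjoints are initialized as $\bar Y=u$ (and $0$ elsewhere) and accumulated in reverse topological order by $\bar v_j\leftarrow\bar v_j+\frac{\partial\phi_k}{\partial v_j}\,\bar v_k$ for each operation, so that $\bar X=u^\top\frac{\partial F}{\partial X}$ emerges at the input nodes. Each accumulation is again $O(1)$ per edge, so the reverse sweep costs $O(L)=O(c(n,p))$ time; the space is dominated by the tape, which stores all $\Theta(L)$ intermediates, giving $O(c(n,p))$ space.

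The main obstacle is the space asymmetry between the two modes, i.e.\ justifying $O(c(n,p))$ rather than $O(s(n,p))$ space for reverse mode. The difficulty is that the adjoint recurrence needs each local Jacobian $\partial\phi_k/\partial v_j$ — and hence the primal values feeding $\phi_k$ — available \emph{during the backward pass}, that is, in the reverse of the order in which they were produced and (in a space-efficient primal evaluation) overwritten. The clean way to guarantee this is to materialize the entire trace, which is exactly what forces the $O(c(n,p))$ bound; making the constant factors rigorous relies on the bounded-arity assumption on the $\phi_k$ and on the observation that one adjoint per variable adds only a constant factor on top of the tape. A secondary subtlety, not needed for the stated bound, is that checkpointing can trade this space back down at the cost of recomputation.
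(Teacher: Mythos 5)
Your proof is correct: the Wengert-list model, the tangent propagation in lockstep with the primal sweep for part (i), and the taped reverse sweep with its $O(c(n,p))$ space cost for part (ii) are exactly the standard argument, and you correctly isolate the space asymmetry (primals needed in reverse order) as the reason reverse mode cannot match forward mode's $O(s(n,p))$ space. Note that the paper itself offers no proof of this statement — it imports the result by citation from the algorithmic differentiation literature — and your argument is precisely the one underlying that cited theorem, so there is nothing to reconcile between the two.
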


First, we apply the theorem to the FMD algorithm, especially the column-wise storing version (the matrix version being even worse). We assume that the inner loss $\mathcal{L}_T(\lambda,\theta)$ can be computed in $O(\dim(\theta)\!+\!\dim(\lambda))$ both time and space. Regarding the inner loss as a scalar function $Y\!=\!F(X)$ with $n\!=\!\dim(\theta)\!+\!\dim(\lambda)$ and $p\!=\!1$, the theorem says that the inner optimization SGD step $\theta^t\!=\!\Phi_t(\theta^{t-1}, \lambda)$ takes $O(\dim(\theta)\!+\!\dim(\lambda))$ time and space. 
Next, regarding $\theta^t\!=\!\Phi_t(\theta^{t-1}, \lambda)$ as $Y\!=\!F(X)$ with $n\!=\!\dim(\theta)\!+\!\dim(\lambda)$ and $p\!=\!\dim(\theta)$, the column-wise recursion $s_t[:,j]\!=\!a_{t-1}\!\cdot\!s_{t-1}[:,j]\!+\!b_{t-1}\cdot e_j$ (where $e_j$ is the all-0 vector but 1 at index $j$) can be computed in $O(\dim(\theta)\!+\!\dim(\lambda))$ both time and space. 
Since we run the recursion for $T$ steps for $j\!=\!1,\dots,\dim(\lambda)$, the time complexity is $O(T\cdot \dim(\lambda) \cdot (\dim(\theta)\!+\!\dim(\lambda)))$ while the space complexity remains at $O(\dim(\theta)\!+\!\dim(\lambda))$. The product form in time complexity makes it infeasible. 

Next, the RMD just maintains the whole computation graph from $\theta^0$ to $\theta^T$ and the outer loss. Thus the complexity for computing a hypergradient is $O(T\cdot(\dim(\theta)\!+\!\dim(\lambda)))$ in both time and space. The space complexity that grows linearly with the number of inner optimization steps, can be problematic in practice oftentimes. 

At last, our algorithm can be analyzed in a similar manner using the theorem. First, the SGLD step (\ref{eq:hgrad_1}) takes $O((\dim(\theta)\!+\!\dim(\lambda)))$ time and space (Note that precisely two times the space is needed since we have to maintain both $\theta^m$ and $\theta^{(m-1)}$ for the $g_m$ recursion). For the $g_m$ recursion (\ref{eq:gm_recursive_final}), we apply the theorem with $F = \frac{\partial\log p(\theta|\lambda)}{\partial\theta}$, which requires only $O((\dim(\theta)\!+\!\dim(\lambda)))$ time and space. Lastly, the hypergradient accumulation step (\ref{eq:hgrad_3}) takes $O(\dim(\theta)\!+\!\dim(\lambda))$ time and space. These three steps go over $T\!=\!B\!+\!M$ times, but with the space overwritten. Thus the complexity for computing a hypergradient takes $O(T \cdot (\dim(\theta)\!+\!\dim(\lambda)))$ time and $O(\dim(\theta)\!+\!\dim(\lambda))$ space. Note that the space does not grow with the number of inner steps, thus clearly beneficial in increasing the number of inner steps compared to RMD.

\section{Details of Experimental Designs and Setups }\label{appsec:expmt}

\subsection{Synthetic 1D BLO Problem}\label{appsec:expmt_synth}

The experimental details for Sec.~\ref{sec:expmt_synth}. 
For the competing hypergradient methods, the running options are chosen fairly: 200 outer iterations (learning rate 0.005) and 100 inner iterations (learning rate 0.005). In our HPO-SGLD, we split the inner iterations into 50 burn-in steps and $M=50$ MC sample accumulation steps. The temperature $\tau=10^{-6}$ is chosen considering deterministic nature of the given BLO problem. The noise scale $\kappa=1$. For IFT-Neumann~\cite{ift}, we set Hessian discounting factor $\alpha=0.99$ and Neumann series length $i=10$. IFT-CG~\cite{imaml} is the IFT strategy using the conjugate gradient matrix inversion, 
and we choose the Hessian regularizer $\gamma=0.01$ and the number of CG iterations 10. RMD-FO stands for RMD with first-order approximation.

\subsection{L1-Regularizer HPO in ERM Learning}\label{appsec:expmt_l1reg}

We take random subsets as training and validations sets. Considering the class cardinality and level of difficulty in prediction across datasets, we set: (Pets) $3\%$ random subset for each training/validation set, (DTD) $40\%$ and (Flowers) $20\%$. Before running the HPO algorithms, for all competing methods we have a warm-up training stage with the feature extractor module initialized by pre-trained weights. The warm-up training is early-stopped using the validation set we have. 

Our HPO-SGLD uses $(B,M)\!=\!(5,5)$ for burn-in steps and the number of MC samples with batch size 8. This matches the number of inner steps 10 used in other methods except for RMD which only allows up to 3 inner iterations on a Tesla V100 GPU. FMD is simply infeasible to run. The inner loop learning rate is $0.01$ for all methods, and the outer loop learning rate is $0.01$ for our HPO-SGLD and $0.5$ for the other methods, selected by cross validation. The outer loop is iterated up to 1000 iterations.

\subsection{Learning an Optimal Loss Function 
}\label{appsec:expmt_find_loss}

It is conventional practice in deep learning to adopt the cross-entropy (CE) loss function for classification problems to train a deep network. However, it can be argued that the CE loss is not necessarily optimal, and depending on the particular training dataset that we work on, there might exist a truly optimal training loss function that is highly different from the CE loss. In other words, one can think of the {\em loss function learning problem}, which is personalized for a particular problem domain. We consider the optimal loss function learning as a meta learning or HPO, and aim to tackle it using our proposed method.

Similarly as 
\cite{taylorglo} 
we use the third-order polynomial parameterized loss function as follows. Let $y$ be $C$-dim one-hot true label and $f(x;\theta)\in\mathbb{R}^C$ be model's prediction logit or class probability vector with the model's parameters $\theta$ (e.g., network weights). 
\begin{align}
&l_\lambda(f(x;\theta),y) = -\frac{1}{C} \sum_{j=1}^C \bigg( 
\lambda_2 (f(x;\theta)_j\!-\!\lambda_1) + \frac{\lambda_3}{2} (f(x;\theta)_j\!-\!\lambda_1)^2 + \frac{\lambda_4}{6} (f(x;\theta)_j\!-\!\lambda_1)^3 \ + 
\label{appeq:poly_loss} \\
& \ \ \ \ \ \ \ \ \lambda_5 (f(x;\theta)_j\!-\!\lambda_1) (y_j\!-\!\lambda_0) + \frac{\lambda_6}{2} (f(x;\theta)_j\!-\!\lambda_1)^2 (y_j\!-\!\lambda_0) + \frac{\lambda_7}{2} (f(x;\theta)_j\!-\!\lambda_1) (y_j\!-\!\lambda_0)^2 \bigg), \nonumber
\end{align}
where $\lambda\in\mathbb{R}^8$ is the parameters of the loss function (i.e., hyperparameters) to be learned.

The loss function optimization is to find the best $\lambda$ where the model $\theta$, when trained with the loss $l_\lambda(\cdot,\cdot)$, maximizes its validation performance. This can be formulated as a BLO problem,
\begin{align}
\min_\lambda \ \mathcal{L}_V(\lambda,\theta^*(\lambda)) \ \ \textrm{s.t.} \ \ \theta^*(\lambda) = \arg\min_\theta \mathcal{L}_T(\lambda,\theta),
\label{eq:blo_find_loss}
\end{align}
where we define
\begin{align}
&\mathcal{L}_T(\lambda,\theta) := \mathbb{E}_{(x,y)\sim D_{train}}\big[ l_\lambda(f(x;\theta),y) \big], \\ 
&\mathcal{L}_V(\lambda,\theta) := \mathbb{E}_{(x,y)\sim D_{val}}\big[ CE(f(x;\theta),y) \big] 
\end{align}

The results on CIFAR-10 with Alexnet are summarized in Table~\ref{tab:loss_fun_optim}. 
Compared to the evolutionary search method~\cite{taylorglo}, denoted by Taylor GLO, our solution yields much higher test accuracy.
For IFT and our HPO-SGLD, we use 10 inner iterations, $50K$ outer iterations. HPO-SGLD uses $\kappa=10^{-6}$ and $(B,M)=(8,2)$.
We also ran experiments with a larger dataset and network model. Table~\ref{tab:loss_fun_optim} shows results on CIFAR-100 with Wide-ResNet-28-10 (WRN)~\cite{wrn}. We split the data into $90\%/10\%$ training/validation sets. 
The batch size is 32 for CIFAR-10 and 128 for CIFAR-100 experiments, and the number of outer iterations is $50K$ for both datasets. The outer and inner loop learning rates are $0.01$ for both datasets and for all methods.

\subsection{Few-shot Learning}\label{appsec:expmt_fewshot}

We follow the well-known protocols for the experiments with the miniImagenet dataset~\cite{maml,imaml,reptile}. For the competing methods, the number of inner-loop iterations is chosen as 10. MAML uses inner and outer learning rates $0.01$ and $0.002$, respectively;  iMAML has inner and outer learning rates $0.5$ and $0.002$, respectively, with 5 CG steps and the regularization coefficient $0.5$; Reptile uses inner and outer learning rates $0.01$ and $10^{-5}$, respectively. 
For our HPO-SGLD, we use $(B,M)=(4,1)$ setting, 
the inner and outer loop learning rates are 0.5 and 0.0005, respectively, and we fix $\kappa=10^{-4}$.

\subsubsection{Comparison with Bayesian FSL Methods and Uncertainty Quantification}\label{appsec:fsl_bayesian}

Although our approach is not limited to the few-shot learning (FSL) problem but general enough to be applicable to any BLO problems, there are some specially tailored Bayesian FSL algorithms (e.g., Bayesian MAML~\cite{bmaml} and Amortized Bayesian Meta Learning~\cite{abml}), and first we compare our HPO-SGLD algorithm with these approaches on the miniImageNet dataset. The results in  Table~\ref{apptab:fsl_bayesian} show that our HPO-SGLD performs similarly to~\cite{bmaml} and better than~\cite{abml}. 
We also compare the uncertainty quantification measure, especially the ECE score~\cite{calibration}. The results in Table~\ref{apptab:fsl_ece} show that our HPO-SGLD has comparable performance to Bayesian MAML in terms of uncertainty quantification 
even though our method is a very general framework not limited to FSL. 

\begin{table}[t!]
\setlength{\tabcolsep}{4.7pt}
\caption{Comparison to existing Bayesian FSL methods on miniImageNet. Test accuracy.
}
\vspace{-0.8em}
\centering
\begin{scriptsize}
\centering
\begin{tabular}{ccc}
\toprule
& 5-way 1-shot & 5-way 5-shot 
\\
\hline
BMAML~\cite{bmaml}\Tstrut & $49.17 \pm 0.87$ & $64.23 \pm 0.69$ \\
ABML~\cite{abml}\Tstrut & $45.00 \pm 0.60$ & -- \\
HPO-SGLD (Ours)\Tstrut & $50.38 \pm 0.87$ & $64.22 \pm 0.67$ \\
\bottomrule
\end{tabular}
%
\end{scriptsize}
\label{apptab:fsl_bayesian}
\end{table}

\begin{table}[t!]
\setlength{\tabcolsep}{4.7pt}
\caption{ECE uncertainty quantification scores on the miniImageNet 5-way 1-shot case. The smaller the better.
}
\vspace{-0.8em}
\centering
\begin{scriptsize}
\centering
\begin{tabular}{cc}
\toprule
Methods & ECE
\\
\hline
MAML~\cite{maml}\Tstrut & $3.80\%$ \\
BMAML~\cite{bmaml} & $2.40\%$ \\
HPO-SGLD (Ours) & $2.35\%$ \\
\bottomrule
\end{tabular}
%
\end{scriptsize}
\label{apptab:fsl_ece}
\end{table}

\subsubsection{Few-shot Learning Experiments on TieredImageNet}\label{appsec:fsl_tiered}

We also report the performance of our HPO-SGLD on the tieredImageNet dataset. As shown in Table~\ref{apptab:fsl_tiered} our approach attains favourable results compared to MAML and ProtoNet. 

\begin{table}[t!]
\setlength{\tabcolsep}{4.7pt}
\caption{Few-shot learning results on the tieredImageNet dataset. Test accuracy.
}
\vspace{-0.8em}
\centering
\begin{scriptsize}
\centering
\begin{tabular}{ccc}
\toprule
& 1-shot & 5-shot
\\
\hline
MAML\Tstrut & $51.67 \pm 1.81$ & $70.30 \pm 1.75$ \\
ProtoNet & $53.31 \pm 0.89$ & $72.69 \pm 0.74$ \\
HPO-SGLD (Ours) & $55.91 \pm 0.72$ & $72.94 \pm 0.47$ \\
\bottomrule
\end{tabular}
%
\end{scriptsize}
\label{apptab:fsl_tiered}
\vspace{-1.5em}
\end{table}

\subsection{Meta Learning of Implicit Neural Representation}\label{appsec:expmt_meta_inr}

We test our method on the meta learning of the implicit neural representation (INR). The main goal of the INR is to mimic a 3D imaging function $T:\mathbb{R}^3\to\mathbb{R}^4$ such that $T(x,y,z)=(r,g,b,\sigma)$ is a mapping of any 3D coordinate point of an object to its RGB color value and the depth $\sigma$. Ideally, one would like to find a neural network model $f_\theta$ (called the {\em implicit neural representation}) that is closest to the true 3D imaging function $T$, namely $f_\theta(x,y,z)\approx T(x,y,z)$ for all $(x,y,z)$. In practice, however, the true 3D function $T$ is usually not available, but only some rendered 2D images (i.e., pixel values) at certain camera poses or views. We let $\mathcal{M}$ be a (known) volumetric rendering function, more specifically, $\mathcal{M}(T,p)$ returns the rendered 2D image at the camera view $p$. 

For a vanilla (non-meta learning) INR problem, we are given a set of several pairs of view and the corresponding rendered pixels for a given object/scene; say, the training set can be formed as $\mathcal{P}=\{(p_i,D_i)\}_{i=1}^m$ of $m$ views where $D_i=\{(r_j,g_j,b_j,\sigma_j)=\mathcal{M}(T,p_i)_j\}_{j=1}^{n_i}$ constitutes the rendered pixel values for $n_i$ points at the camera view $p_i$. The training of INR amounts to minimizing $L(\theta,T,\mathcal{P}) := \mathbb{E}_{(p,D)\sim \mathcal{P}} \mathbb{E}_{j\sim D} ||\mathcal{M}(f_\theta,p)_j-\mathcal{M}(T,p)_j||^2$. The test is then performed on some novel views $p^*$ of the same object/scene, the performance being measured as the the L2 distance (or its inverse $\log$ known as the PSNR) between the rendered color values of the learned INR and the true values.

But the main drawback of this vanilla training is that  for each training object/scene, the training has to be restarted with the random initial model $\theta^0$, even though we have a large number of objects/scenes that are relevant to each other (e.g., lamp objects with different types and styles). The idea of meta learning of INR arose in~\cite{meta_inr}, in which we aim to meta-learn a good initial model $\theta^0$ such that once the training starts from this $\theta^0$, it converges much more quickly and reliably than starting from a random initial. 
So it is similar to MAML~\cite{maml} in that the meta model we aim to learn is the {\em initial} model to start a regular training with.

To this end, we define a meta training data set $\mathcal{D}^{tr}=\{(T^{tr}_t, \mathcal{P}^{tr}_t)\}_t$ of different objects/scenes $T^{tr}_t$ and corresponding views and rendered color values $\mathcal{P}^{tr}_t$. Similarly, the validation data set $\mathcal{D}^{va}$ is formed for non-overlapping views. We can then formulate the meta INR learning problem as a BLO in (\ref{eq:blo}) 
where the outer objective is:
\begin{align}
\mathcal{L}_V(\lambda,\theta^*(\lambda)) = \mathbb{E}_{(T,\mathcal{P})\sim \mathcal{D}^{va}} [L(\theta^*(\lambda), T, \mathcal{P})],
\end{align}
and the inner optimum $\theta^*(\lambda) = \theta^K$ as we use the unrolled form to specify dependency on the initial $\lambda = \theta^0$ explicitly. That is, for $k=1,\dots,K$, 
\begin{align}
\theta^k\!=\!\theta^{k-1} - \eta \nabla_\theta \mathbb{E}_{(T,\mathcal{P})\sim \mathcal{D}^{tr}} [L(\theta^{k-1}, T, \mathcal{P})].
\label{eq:meta_inr_inner}
\end{align}

We tested our approach on the task of single-view synthesis for the ShapeNet datasets~\cite{shapenet}. 
The meta learning formulation for this task was previously introduced in~\cite{meta_inr}, and we followed the experiment protocols from it: the simplified NeRF~\cite{simplified_nerf} is used, a single MLP with 6 layers, each with 256 channels and ReLU activations. The inner and outer learning rates are 0.5 and $5\times 10^{-5}$ ($5\times 10^{-4}$ for the Chairs dataset), and the outer loops run for $45$ (Lamps), $30$ (Cars) and $15$ epochs (Chairs). There are 25 train views and 1 test view, and the batch size is 128. 
In~\cite{meta_inr}, they showed that the Reptile~\cite{reptile} meta-learned initial model outperformed standard random initialization, as shown in Table~\ref{tab:meta_inr} for the single-view meta test scenarios. The Table also shows that some variants of the Reptile-optimized initial network, either training another network to match the outputs or randomly shuffling the network weights, are inferior to the original meta-trained one.  
We follow the experimental setup identical to that of~\cite{meta_inr}, and run our HPO-SGLD to this task. 
Motivated from Reptile's progressive meta-gradient update scheme, namely $\lambda$ is updated along the direction of the inner optima $\theta^*(\lambda)$, we add the regularization term $||\lambda\!-\!\textrm{sg}(\theta^*(\lambda))||^2$ to our validation loss, where  $\textrm{sg}$ refers to stop-gradient. 
With $\tau\!=\!10^{-4}$, $\kappa=10^{-6}$ and $(B,M)\!=\!(27,5)$, where the latter matches the 32 inner steps for Reptile, our HPO-SGLD achieves performance comparable to or sometimes better than the state-of-the-arts.

\subsection{Invariance Learning}\label{appsec:expmt_augerino}

We tackle the invariance learning in neural networks~\cite{augerino}, another interesting problem that can be viewed as an HPO. The goal is to learn the invariant geometric transformations in input images for neural networks; e.g., all those translation, rotation, scaling transformations are invariant for the object classification task, while only translation and scaling are invariant for pose estimation. If we can parameterize the transformations $T\!\sim\!T_\lambda$, then the output prediction for input image $x$ can be expressed as the expectation over the allowable transformations, $\overline{g}(x,\theta,\lambda)\!:=\!\mathbb{E}_{T\!\sim\!T_\lambda}[g(T x, \theta)]$ to incorporate the invariant transformations~\cite{invar1,invar2,invar3}, where $g(x;\theta)$ is the classification scores (logits) of the neural network.
Using the Lie group algebra, one can have differentiable transformations $T_\lambda\!=\!\textrm{expm}\big( \sum_{i=1}^6 \epsilon_i \lambda_i G_i \big)$ where $\epsilon_{1:6}$ are uniform random samples from $[-1,1]$, and $G_{1:6}$ are known $(3 \times 3)$ generating matrices for X-/Y-translations, rotation, X-/Y-scaling and shearing~\cite{liegroup}.  

We can then formulate the invariance learning as BLO,
\begin{align}
\min_\lambda & -\rho ||\lambda||_2^2 + \mathbb{E}_{(x,y)\sim val}[l(\overline{g}(x,\theta^*(\lambda),\lambda), y)] \\
\textrm{s.t.} & \ \theta^*(\lambda) = \arg\min_\theta \mathbb{E}_{(x,y)\sim train}[l(\overline{g}(x,\theta,\lambda), y)]
\end{align}
Note that the input transformation is applied to both training and validation set images. The negative L2 norm is added to the validation loss to encourage more aggressive data augmentation ($\lambda$ being away from 0), which can be helpful for better generalization. The coefficient $\rho=0.01$ is chosen by cross validation for simplicity, otherwise one could form another level of meta learning tri-level optimization, which is out of the scope. 
The above formulation is similar to that of Augerino~\cite{augerino}, however, they do not form a BLO, but only optimize the outer loss for both training and validation sets jointly over $\lambda$ and $\theta$. 

On the input image transformed MNIST datasets (e.g., randomly rotated by 180 degrees (full) or 90 (partial)), our test results are shown in Table~\ref{apptab:auger_mnist}. Similar to competing methods, we approximate $\overline{g}$ by the Monte Carlo average of 31 samples $T\!\sim\!T_\lambda$. We split the training data into $80\%/20\%$ train/validation sets for meta learning. 
Our approach outperforms Augerino~\cite{augerino}, 
and being comparable to LILA~\cite{lila} which is computationally expensive to perform full-Hessian Laplace approximation following the Bayesian model selection strategy.
For the full-rotated case (Rot-180), our model returns the optimal geometric transformation hyperparameters: 
$\lambda^*\!=\![-0.01, 0.01, -10.0, -0.00, 0.01, -0.06]$  that correspond to X-/Y-translation, rotation, X-/Y-scale, and shear parameters, respectively. 
For partial rotation (Rot-90), 
$\lambda^*\!=\![0.02, 0.03, 1.04, 0.02, 0.06, 0.03]$. 
That is, the rotation parameter is correctly identified, successfully learning the input invariance that resides in the data.

Following~\cite{lila}, the main backbone network is the MLP with a single hidden layer of 1000 units and $tanh$ nonlinearity. We use the inner-loop iterations $(B=4,M=1)$ with the temperature $\tau=0.01$. The learning rates are 0.01 for the outer loop and 0.05 for the inner loop. The batch size is 1000 and the outer loop is iterated up to 2000 iterations. 

\section{Additional Experimental Results}\label{appsec:additional_expmt}

\subsection{Running Times}\label{appsec:wall_clock}

We analyze empirical computation times of the proposed method by measuring the wall clock time comparison for the small-sized Synth-1D dataset and the large-scale L1-Reg dataset (with ViT backbones). The average running time for one outer iteration (hypergradient update) is summarized in  Table~\ref{apptab:wall_clock}. We use a single RTX-2080 GPU for Synth-1D and a single V100 GPU for L1-Reg experiments.

\begin{table}[t!]
\setlength{\tabcolsep}{2.7pt}
\caption{Wall clock running times for Synth-1D and L1-Reg problems. 
}
\vspace{-1.0em}
\centering
\begin{scriptsize}
\centering
\begin{tabular}{cccccc}
\toprule
Time (msec) & HPO-SGLD (Ours) & IFT & IFT-CG & RMD & FMD
\\
\hline
Synth-1D\Tstrut & $179.8$ & $76.7$ & $90.7$ & $52.0$ & $164.3$ \\
\midrule
L1-Reg\Tstrut & $2325.3$ & $2175.3$ & $1976.0$ & $1069.0$ & infeasible \\
\bottomrule
\end{tabular}
%
\end{scriptsize}
\label{apptab:wall_clock}
\end{table}

Our theoretical complexity analysis (Table~\ref{tab:complexity} in our main paper) shows that our HPO-SGLD is more scalable than unroll-based RMD and FMD algorithms. However, our current implementation introduces constant factors that make it slower in wall-clock time than IFT and RMD on moderate sized tasks. This is largely because we haven't utilized Hessian-vector product operations optimized for our specific purpose, but rather relied on a general-purpose library, namely PyTorch's \texttt{autograd.grad()}. A more sophisticated Hessian-vector computation module (e.g., defining our own backward method) would improve this wall-clock time.

\subsection{Comparison with Other Stochastic Methods for Bi-Level optimization}\label{appsec:other_stochastic_blo}

In this section we differentiate our Stoachastic BLO algorithm from some existing works~\cite{ref_A,ref_B,ref_C,ref_D} on stochastic methods for BLO in the literature.

Some of the works consider and deal with stochastic noise in the BLO problem. To clarify, by stochasticity, they mostly meant {\em noisy inner functions} (e.g., minibatch loss evaluation), and performed convergence analysis based on the unbiasedness of the estimates. We emphasize that as far as we know, no prior work introduced the stochastic optimization by turning the inner loss function into the Gibbs expectation form as we did. This means that their hypergradient estimates are not robust to multiple inner minima as in our HPO-SGLD.

About~\cite{ref_B}: The method (AmIGO) is very similar to IFT-CG (aka iMAML~\cite{imaml}) -- basically forming a quadratic function $0.5 \cdot z^\top H z - b^\top z$ where $H$ is the Hessian in the IFT. The optimal solution $z^* = H^{-1} b$ requires Hessian inverse, and IFT-CG does conjugate gradient to iteratively find $z^*$. In the AmIGO, they used warm-start vanilla gradient descent to get $z^*$ (They also have a CG version, AmIGO-CG in~\cite{ref_B}). Although they showed convergence analysis for a convex inner function case, the performance is considerably different from IFT-CG as we demonstrated in the empirical comparison below.

The work~\cite{ref_D} is also similar to~\cite{ref_B}, but instead of finding the inner optima $\theta^*(\lambda)$ and optimum of the quadratic $z^*$ by several iterations, \cite{ref_D} used one-step update and repeated the procedure many times. 
\cite{ref_C} follow the framework of \cite{ref_D}, and used STORM variance reduction technique~\cite{storm}. \cite{ref_A} is basically IFT (Neumann) but adopted momentum acceleration in the inner loop optimization, and also used STORM-like variance reduction technique. 
IFT-Neumann was also introduced/used in the BLO community~\cite{ghadimi18,blo_naa}.

In summary: \cite{ref_B,ref_C,ref_D} are all about IFT with an auxiliary quadratic function to iteratively solve Hessian inversion. Hence they inherit the main drawbacks from IFT: (i) difficult to handle the $\lambda$-dependent initial $\theta^{(0)}$ problems (e.g., MAML-like initial model learning) in general; and (ii) a strong assumption of the stationarity condition for the inner problem, with unclear impact of violating that assumption in practice. Finally, most of their convergence analysis is based on the convexity of the inner problem. This is violated in practice for applications in neural networks, but our HPO-SGLD convergence analysis does not rely on this strong assumption.

\textbf{Empirical comparison.} 
We also have an empirical comparison with AmIGO from~\cite{ref_B}, one of the related methods that the reviewer mentioned. They have two versions: Conjugate-Gradient and SGD (denoted by AmIGO-CG/SGD, respectively) depending on how the Hessian inverse problem is solved in a warm-start manner. The results are summarized in Table~\ref{apptab:amigo}.
%
\begin{table}[t!]
\setlength{\tabcolsep}{5.7pt}
\caption{Comparison with AmIGO~\cite{ref_B}, another stochastic methods for BLO. 
}
\vspace{-1.0em}
\centering
\begin{scriptsize}
\centering
\begin{tabular}{cccccc}
\toprule
Errors & Synth-1D & Noisy-Synth-1D & L1-Reg (Pets) & L1-Reg (DTD) & L1-Reg (Flowers)
\\
\hline
IFT-Neumann\Tstrut & $(0.0004, 0.0004)$ & $(0.0153, 0.0426)$ & $28.07 \pm 0.65$ & $34.89 \pm 0.26$ & $15.30 \pm 0.19$ \\
\midrule
IFT-CG\Tstrut & $(0.0001, 0.0001)$ & $(0.0057, 0.0311)$ & $27.47 \pm 0.90$ & $34.80 \pm 0.31$ & $15.15 \pm 0.15$ \\
\midrule
RMD\Tstrut & $(0.0127, 0.0141)$ & $(0.0124, 0.0150)$ & $25.49 \pm 0.64$ & $36.36 \pm 0.48$ & $15.65 \pm 0.37$ \\
\midrule
FMD\Tstrut & $(0.0127, 0.0141)$ & $(0.0123, 0.0149)$ & Infeasible & Infeasible & Infeasible \\
\midrule
AmIGO-SGD~\cite{ref_B}\Tstrut & $(0.0001, 0.0001)$ & $(0.0008, 0.0233)$ & $27.41 \pm 1.04$ & $34.70 \pm 0.39$ & $15.65 \pm 0.35$ \\
\midrule
AmIGO-CG~\cite{ref_B}\Tstrut & $(0.0001, 0.0001)$ & $(0.0057, 0.0311)$ & $26.57 \pm 1.15$ & $34.86 \pm 0.47$ & $15.55 \pm 0.21$ \\
\midrule
HPO-SGLD (Ours)\Tstrut & $(0.0001, 0.0000)$ & $(0.0008, 0.0113)$ & $23.75 \pm 0.21$ & $33.53 \pm 0.46$ & $14.56 \pm 0.08$ \\
\bottomrule
\end{tabular}
%
\end{scriptsize}
\label{apptab:amigo}
\end{table}
%
Hyperparameters are chosen fairly for all competing methods (e.g., the numbers of outer/inner iterations are 200/100 in Synth1D). For L1Reg we only show the best test errors among different hyperparameter choices (Table 4 of the submission) for each method. AmIGO-SGD/CG take up to 10 SGD/CG iterations. Since we have observed that the CG iterations converge so quickly, there is rarely a significant difference between 0-initial cold-start IFT-CG and the warm-start AmIGO-CG.

\subsection{Synthetic Quadratic Function Experiments}\label{appsec:synth_quad}

As another proof of concept, we have conducted synthetic experiments with quadratic loss functions where both inner and outer loss functions are convex quadratic, similar to the setup in the previous work~\cite{ref_B}. We have aimed to follow their setup as much as possible from their paper since we could not find any reproducible code publicly available. 

We test two cases: the condition number of the inner quadratic matrix is either 10 (easier) or 1000 (harder). For each competing method, we measure the number of iterations required to achieve $10^{-5}$ relative error to the true optimal solution. 

Here we also include the results of AmIGO-CG/SGD from~\cite{ref_B} as the paper offered convergence analysis for strong convex inner loss function cases. They have two options CG and SGD versions to solve the Hessian inverse problem in a warm-start manner. 
Some hyperparameters are as follows: the maximum number of inner iterations is 100 (in our HPO-SGLD, we set $B=M=50$ for fairness); we run all methods up to 2000 outer iterations, hence the maximum number of total iterations is $200K$. The outer learning rate is $0.1$ for all methods, and for those that need inner learning rate (e.g., our HPO-SGLD, RMD, and AmIGO-SGD), we choose 0.05. HPO-SGLD temperature is 1, IFT-Neumann has ($\alpha=0.99$, series length 10), IFT-CG, AmIGO-CG/SGD all use the maximum number of iterations 10 to solve the Hessian inverse. 

\begin{table}[t!]
\setlength{\tabcolsep}{2.7pt}
\caption{Total number of iterations to achieve $<10^{-5}$ relative error for synthetic quadratic experiments. 
}
\vspace{-1.0em}
\centering
\begin{scriptsize}
\centering
\begin{tabular}{ccccccc}
\toprule
Condition number & HPO-SGLD (Ours) & IFT-Neumann & IFT-CG & AmIGO-CG~\cite{ref_B} & AmIGO-SGD~\cite{ref_B} & RMD
\\
\hline
10\Tstrut & $12K$ & $13K$ & $12K$ & $12K$ & $12K$ & $13K$ \\
\midrule
1000\Tstrut & $58K$ & $57K$ & $56K$ & $52K$ & $58K$ & $>200K$ \\
\bottomrule
\end{tabular}
%
\end{scriptsize}
\label{apptab:quadratic}
\end{table}

The averaged results (the total number of iterations for $<10^{-5}$ relative error) are summarized in Table~\ref{apptab:quadratic}.
The results imply that our HPO-SGLD exhibits convergence speed comparable to the IFT-based conjugate gradient methods even for these well-behaved single-mode convex BLO problems, both in numerically easy (condition number 10) and hard (1000) settings.

\subsection{Comparison with Evolutionary Search on Synth-1D Datasets}\label{appsec:synth1d_es}

Although the Evolutionary Search method does not scale well with the dimension of the hyperparameters $\lambda$, we have implemented a version of the evolutionary search algorithm to solve the BLO problem for the small Synth-1D experiments. The idea is to estimate the hypergradient using the ES-gradient~\cite{es_ns} with a smoothed version of the outer loss. More specifically, the ES-gradient of a function $f(x)$ is $\tilde{\nabla}f(x) = \frac{1}{\sigma} \cdot E_{z\sim N(0,I)} [ z \cdot f(x+\sigma z) ]$.  We apply it to BLO’s outer loss function $L_V(\lambda, \theta^*(\lambda))$. We choose $\sigma=0.001$ and number of ES samples 100. The results are summarized in Table~\ref{apptab:synth1d_es}. 

\begin{table}[t!]
\setlength{\tabcolsep}{4.7pt}
\caption{Comparison with the Evolutionary Search method~\cite{es_ns} on Synth-1D datasets. 
}
\vspace{-1.0em}
\centering
\begin{scriptsize}
\centering
\begin{tabular}{ccc}
\toprule
Errors ($\lambda$, $\theta$) & Synth-1D & Noisy-Synth-1D
\\
\hline
IFT-Neumann\Tstrut & $(0.0004, 0.0004)$ & $(0.0153, 0.0426)$ \\
\midrule
IFT-CG\Tstrut & $(0.0001, 0.0001)$ & $(0.0057, 0.0311)$ \\
\midrule
RMD\Tstrut & $(0.0127, 0.0141)$ & $(0.0124, 0.0150)$ \\
\midrule
FMD\Tstrut & $(0.0127, 0.0141)$ & $(0.0123, 0.0149)$ \\
\midrule
Evolutionary Search~\cite{es_ns}\Tstrut & $(0.0275, 0.0169)$ & $(0.0041, 0.0173)$ \\
\midrule
HPO-SGLD (Ours)\Tstrut & $(0.0001, 0.0000)$ & $(0.0008, 0.0113)$ \\
\bottomrule
\end{tabular}
%
\end{scriptsize}
\label{apptab:synth1d_es}
\vspace{-1.0em}
\end{table}

\end{document}